\def\eqref#1{equation~\ref{#1}}
\def\1{\bm{1}}
\DeclareMathAlphabet{\mathsfit}{\encodingdefault}{\sfdefault}{m}{sl}
\SetMathAlphabet{\mathsfit}{bold}{\encodingdefault}{\sfdefault}{bx}{n}
\newtheorem{theorem}{Theorem}[section]
\newtheorem{lemma}[theorem]{Lemma}
\newtheorem{definition}{Definition}[section]
\newcommand{\norm}[1]{\left\lVert#1\right\rVert}
\newcommand\numberthis{\addtocounter{equation}{1}\tag{\theequation}}
\definecolor{lavender}{RGB}{230, 230, 249} 
\definecolor{aliceblue}{RGB}{242, 248, 255} 
\definecolor{blu}{RGB}{153, 204, 255} 
\definecolor{pur}{RGB}{222, 226, 250} 
\newtcolorbox{optionbox1}[1][]{
  mybase,
  colframe=blu!75!black,
  colback=blu!20!white,
  before=\vspace{1pt},
  after=\vspace{-2pt},
  width=0.35\linewidth,
  #1
}
\newtcolorbox{optionbox2}[1][]{
  mybase,
  colframe=pur!75!black,
  colback=pur!50!white,
  before=\vspace{0pt},
  after=\vspace{-2pt},
  #1
}
\title{
Zeroth-Order Sharpness-Aware Learning with Exponential Tilting}
\author{%
Xuchen Gong \quad Tian Li \\
University of Chicago \\
\texttt{\{xuchengo,litian\}@uchicago.edu}
}
\begin{document}

\maketitle
\begin{abstract}
    % Zeroth-order optimization has demonstrated effectiveness on language model fine-tuning, and existing work has shown that running gradient descent using the classical zeroth-order gradient estimator finds flat minima, given that its loss objective is precisely regularized by the trace of the hessian. However, it is empirically understood that the max-case sharpness-aware objective performs better than the average-case objective, ... 
    Classic zeroth-order optimization approaches typically optimize for a smoothed version of the original function, i.e., the expected objective under randomly perturbed model parameters. This can be interpreted as encouraging the loss values in the perturbation set to be small on average. Popular sharpness-aware minimization (SAM) objectives, however, typically focus on the largest loss within the neighborhood to arrive at flat minima more effectively. In this work, we connect zeroth-order optimization (and its corresponding objectives) with SAM approaches explicitly, through an exponential tilting objective that provides a smooth transition between the \texttt{average}- and the \texttt{max}-loss formulations. We explore new zeroth-order algorithms to solve a \textit{soft} SAM objective parameterized by a tilting parameter $t$. We provide precise characterizations of the sharpness notions of the tilted SAM framework. Practically, our approach can be used as a gradient-free and memory-efficient alternative to SAM variants, and it achieves better generalization compared to vanilla zeroth-order baselines on a wide range of downstream tasks, including classification, multiple choice QA, and language generation.
\end{abstract}
\section{Introduction} \label{sec:intro}

% pitch from zeroth-order
\thinmuskip=1mu
\medmuskip=1mu
\thickmuskip=1mu
Zeroth-order optimization has gained traction when the first-order or higher-order gradient access is unavailable, unreliable, or expensive. Applications include black-box adversarial  attacks~\citep{ZOO}, fine-tuning large language models~\citep{deepzero,mezo, ZObenchmark}, differentially private learning~\citep{dpzero, dpzero_saeed}, and science problems.
% For example, adversarial attacks against deployed models via query-based approaches to estimate gradients without backpropagation \citep{ZOO}; \xc{RL}; Training vision models \citep{deepzero} or fine-tuning large language models with prompts \cite{mezo, ZObenchmark} without backpropagation, and to remove the memory overhead that grows with batch-size in differential privacy settings \citep{dpzero, dpzero_saeed}.
Consider the standard empirical risk minimization (ERM) problem: $f(x) \coloneqq \frac{1}{N} \sum_{i=1}^N f(x;\xi_i)$ where $x \in \mathbb{R}^d$ is the model parameters and $\{\xi_i\}_{i\in [N]}$ represents training samples. One of the most popular zeroth-order algorithms relies on two function evaluations in the opposite directions to estimate the gradients. Such a two-point estimator takes the updating rule  
%Consider minimizing the population risk $\mathbb{E}[f(x; \Xi)]$ where $x$ is the model weights of dimension $d$, $\Xi$ is the random variable for data, \tian{i feel we don't need to start from the population formulation of ERM. We can start from $\frac{1}{N} f$?} and $f(x;\xi): \mathbb{R}^d \to \mathbb{R}$ is the loss value on data $\Xi=\xi$. Since the distribution underlying $\Xi$ is unknown in practice, traditional \textit{empirical risk minimization} (ERM) samples training samples $\{\xi_i\}_{i=1}^N, \xi_i \sim \mu(\Xi)$ for $\forall i$ and optimizes empirical risk $f(x) \coloneqq \frac{1}{N} \sum_{i=1}^N f(x;\xi_i)$. In the lack of gradients, a standard zeroth-order gradient estimator is the two-point symmetric difference 
$G(x, \rho, u) \coloneqq (1/2\rho)[f(x + \rho u) - f(x - \rho u)] u$,
where $u$ is a random direction sampled from some distribution $\mu(u)$ (e.g., uniform over a sphere or Gaussian), and $\rho > 0$ is a smoothing parameter. 
% Note that in the following, we abbreviate $\mathcal{B}\coloneqq\text{Unif}(\sqrt{d}B^{d})$ and $\mathcal{S}\coloneqq\text{Unif}(\sqrt{d}S^{d-1})$. We use $\mathbb{E}_{\mathcal{B}}$, $\mathbb{E}_{v\sim\mathcal{B}}$, and $\mathbb{E}_{v\sim\text{Unif}(\sqrt{d}B^d)}$ interchangeably when the meaning is clear from the context.
%It is well-known that $g(x, \rho, u)$ is an unbiased gradient estimator for $\nabla f(x)$ when the smoothing parameter $\rho\rightarrow 0$. 

Under mild assumptions, prior literature has shown that the two-point estimator optimizes an approximated, smooth version of the original function, i.e., $\mathbb{E}_u[G(x, \rho, u)] = \nabla_x \mathbb{E}_{v}[f(x+\rho v)]$\footnote{The distribution of $v$ depends on that of $u$; see Section~\ref{sec:method} for details.}~\citep{flaxman2004online}. In other words, the zeroth-order method effectively minimizes the expected loss $\mathbb{E}_{v} [f(x+\rho v)]$ in some perturbed neighborhood around $x$. 
%For $\rho \nrightarrow 0$ as in practice, the expected gradient $\mathbb{E}_{\mu(u)} [G(x, \rho, u)] = \nabla_x \mathbb{E}_{\mu(v)}[f(x+\rho v)]$, where $\mu(v)$ is some smoother dependent on the choice of $\mu(u)$. For example, $\mu(v)=\mathcal{N}(0, I_d)$ if $\mu(u)=\mathcal{N}(0, I_d)$ and $\mu(v)=\text{Unif}(\sqrt{d}B^d)$ if $\mu(u)=\text{Unif}(\sqrt{d}S^{d-1})$ (Section~\ref{sec:method}). \tian{maybe we can just say $\mathbb{E}_u[] = \nabla_x \mathbb{E}_v[]$ without explaining all the details} Thus, this does not minimize the loss at the weights $x$ alone, but rather the average loss in a neighborhood around $x$. 
Under such interpretation, zeroth-order optimization has a critical benefit that it is not originally designed for---ensuring the loss is small on average within the neighborhood so that the local minima can be flatter~\citep{wen2022does, universal, zoflat}. %Such a connection has also been revealed previously~\citep{zoflat}.
%In fact, the objective $\min_x \mathbb{E}_{\mu(\epsilon)}[f(x+\epsilon)]$ is exactly a special case among a family of sharpness-aware minimization (SAM) approaches~\citep{wen2022does, universal, zoflat}. %Existing work has explored the explicit bias of the objectives of the form $\mathbb{E}_{\epsilon\in\mu(\epsilon)} [f(x+\epsilon)]$, which is precisely the average-case of the \textit{Sharpness-Aware Minimization} (SAM) objective \citep{wen2022does, universal, zoflat}. Specifically, by the Taylor expansion, if $\mathbb{E}_{\mu(\epsilon)}[\epsilon\epsilon^{\top}]=I_d$, we have
% \begin{align}
%     \mathbb{E}_{\mu(\epsilon)}[f(x+\epsilon)] = \underbrace{f(x)}_{\text{Empirical loss}} + \underbrace{\frac{\rho^2}{2} \text{Tr}(\nabla^2 f(x))}_{\text{Sharpness}} +  O(\rho^2),\label{eq:avg-case}
% \end{align}
% which effectively uses $\text{Tr}(\nabla^2 f(x))=\sum_{i=1}^d \lambda_i$ for objective regularization, where $\lambda_1 \geq \ldots \geq \lambda_d$ are the ordered eigenvalues of the Hessian matrix $\nabla^2 f(x)$. 
For over-parameterized and non-convex models, encouraging flatter local minima (i.e., optimizing a sharpness-aware objective) can be an effective technique that improves generalization performance \citep{sam_paper,sam_lm}. 
%\tian{we can simplify the notations a bit and talk about the background at a higher level}

However, the aforementioned vanilla zeroth-order estimate can only be viewed as a special sharpness-aware minimization (SAM) objective that focuses on the average loss. The canonical SAM approach and its variants typically uses a min-max formulation~\citep{sam_paper}, which have been extensively studied in prior works and demonstrated  strong empirical performance~\citep[e.g.,][]{wu2020adversarial,tram}. 

In this work, observing the connections between zeroth-order optimization and SAM, we explore the explicit bias of zeroth-order optimization towards flat solutions in detail. We develop new zeroth-order algorithms that solve a continuous spectrum of sharpness-aware objectives, ranging from the average- to the max-loss formulations, leveraging exponential tilting. Exponential tilting has been used as a common technique to create parametric distribution shifts in various contexts~\citep{laplace,tiltedloss,robey2022probabilistically}. It has also been used to develop new sharpness-aware objectives that reweigh different local minima~\citep{tsam}. Our zeroth-order algorithms solve a similar objective to arrive at flatter solutions, while preserving the same computational and memory efficiency as the classic zeroth-order methods.

%Therefore, we develop a series of sharpness-aware zeroth-order gradient estimates that effectively optimize for a continuous spectrum of objectives that range from the average-case SAM $\mathbb{E}_{\mu(\epsilon)}[f(x+\epsilon)]$ to the worst-case SAM $\max_{\epsilon\in \mu(\epsilon)} [f(x+\epsilon)]$, which preserve the same computational and memory efficiency as the vanilla zeroth-order gradient estimator. 

To be more specific, we consider a soft SAM objective parameterized by a tilting parameter $t$ (named tilted SAM or $t$-SAM~\citep{tsam}) that covers the average and min-max formulation as special cases. To approximate the unbiased gradient estimator of the titled SAM objective, we propose different strategies based on finite function evaluations under random perturbations of the model parameter. Additionally, we provide the precise characterizations of a family of sharpness notions of the tilted SAM framework and the solutions it favors, as a function of the tilting parameter $t$. While our framework in principle applies to any form of model perturbations, we investigate the cases with Gaussian and ball-constrained uniform perturbations  in detail. 

Our \underline{z}eroth-order \underline{e}xponential-tilted \underline{s}harpness-aware \underline{t}raining (ZEST) approach achieves superior performance compared with vanilla two-point estimator (corresponding to solving an average-loss based sharpness-aware objective) across various model types and downstream tasks, including classification, multiple choice QA, and language generation (Section~\ref{sec:exp}). In applications where zeroth-order optimization is competitive in general, ZEST can even achieve higher accuracies than first-order SAM variants while being gradient-free and memory-efficient (Section~\ref{sec:opt}). 

In summary, our contributions are as follows.  In Section~\ref{sec:method}, we propose a new zeroth-order optimization algorithm (ZEST) that uses exponential tilting to recover a smooth spectrum of sharpness-aware objectives. \textbf{Theoretically}, we analyze the explicit bias (the ``sharpness'' notion) of the $t$-SAM objective and illustrate how ZEST can reach flatter minima than the baselines. We show that our method can identify and conservatively avoid minima with large curvatures in any direction, while vanilla zeroth-order methods cannot (Section~\ref{sec:sharpness}). \textbf{Empirically}, in Section~\ref{sec:exp}, we evaluate ZEST on comprehensive language tasks and different model types, demonstrating that ZEST performs better than zeroth-order baselines while being equally fast and memory-efficient. 

% [Through both theoretical analysis and toy problems, we show that our approach 1) has a notion .., 2) illustrates the convergence behaviors under the influence of our sharpness notions through toy examples.]

% Additionally, we present the definition of sharpness in each objective, which ranges from the average-case that weighs each eigenvalue equally to the max-case that weighs the largest eigenvalue only. \tian{this is our contribution. need to expand the challenges, gap in existing research, and our approaches}

% In practice, we show that our approach can be used as a gradient-free and memory-efficient alternative to SAM variants, and it achieves better generalization (test performance) compared to vanilla zeroth-order baselines across model types (RoBERTa, OPT, Llama 1 ok, llama 2-7B-hf not ok, see if llama3 is ok?), model sizes (125M, 1.3B, 7B?), and downstream tasks, including classification, multiple choice, and language generation.

\section{Preliminaries and Related Work} \label{sec:related}

% \tian{in general, it is worth double-checking that Sewoong paper related work to see if there is any important point/argument/paper that we miss}

\paragraph{Zeroth-Order Optimization.} Zeroth-order methods \citep{spall2002multivariate,shamir2017optimal,bach2016highly,duchi2015optimal,jamieson2012query,liu2018zeroth,nesterov2017random,agarwal2011stochastic} have gained recent attention due to their promising performance in fine-tuning language models and their memory efficiency, at the cost of increased iteration complexity compared to first-order methods~\citep{mezo,dpzero}. Zeroth-order methods typically optimize for a smoothed version of the functions, which can be interpreted as the expected loss values under perturbed model parameters. Enforcing that the loss values are small in expectation has connections with a special case of sharpness-aware approaches~\citep{zoflat}, where sharpness is defined as the trace of the Hessian~\citep{wen2022does}. Specifically, by Taylor expansion, the effective objective (proved in Appendix~\ref{app:related}) is
\begin{align*}
    \mathbb{E}_v[f(x+\rho v)] = \underbrace{f(x)}_{\text{Empirical loss}} + \underbrace{\frac{\rho^2}{2}\text{Tr}(\nabla^2 f(x))}_{\text{Sharpness }R_{\text{avg}}} + O(\rho^2d).
\end{align*}
In this work, we develop new zeroth-order algorithms that solve a spectrum of SAM objectives that cover this special case (Section~\ref{sec:method}) and provide precise characterizations of sharpness in our approach (Section~\ref{sec:sharpness}).

\paragraph{Sharpness-Aware Minimization.} Sharpness-Aware Minimization (SAM) and its variants have been extensively studied in prior work~\citep{sam_paper,randomsam,asam,bartlett2023dynamics,mi2022make,sabo,esam,wen2022does,baek2024sam,universal,towards,long2024sharpness}. The popular SAM objective minimizes the worst-case loss over perturbed parameters so that the loss values are uniformly small near the local minimum~\citep{sam_paper}. The problem is defined as 
\begin{align}
    \min_x \max_{\norm{\epsilon}\leq \rho} f(x+\epsilon), \label{eq:sam}
\end{align}
 where $\rho$ is the radius of the ball around $x \in \mathbb{R}^d$. To fully realize the potential of zeroth-order approaches and due to the difficulty in optimizing for this objective without gradient access, we propose to leverage an exponentially-tilted objective that can smoothly approximate this min-max formulation. 
% \textcolor{blue}{One straightforward candidate is $\min_x \mathbb{E}_{\mu(\varepsilon)}[\max_{\epsilon\sim\mu(\epsilon)} f(x+\epsilon+\varepsilon)]$, the objective of first finding the perturbation $\epsilon$ that maximizes the loss, and then running vanilla zeroth-order optimization to solve the outer minimization. Its disadvantage is its additional model perturbations and queries in solving the inner problem.} \tian{blue part doesn't belong to related work}
In particular, we consider the tilted sharpness-aware minimization ($t$-SAM) objective~\citep{tsam}, which is paramaterized by a hyperparameter $t>0$ as
% \begin{multicols}{2}
%   \begin{equation}
%     F_t(x) = \frac{1}{t} \log \mathbb{E}_{\mu(\epsilon)}\left[e^{tf(x+\epsilon)}\right], \label{eq:tsam}
%   \end{equation}\break
%   \begin{equation} 
%     \nabla_x F_t(x) = \frac{\mathbb{E}_{\mu(\epsilon)}[e^{tf(x+\epsilon)}\nabla f(x+\epsilon)]}{\mathbb{E}_{\mu(\epsilon)}[e^{tf(x+\epsilon)}]}. \label{eq:tsam_grad}
%   \end{equation}
% \end{multicols}
\begin{align}
    F_t(x) = \frac{1}{t} \log \mathbb{E}_{\mu(\epsilon)}\left[e^{tf(x+\epsilon)}\right], \label{eq:tsam}
\end{align} 
where $\mu(\cdot)$ denotes the distribution density of the perturbation. For instance, $\mu(\epsilon)$ can be the uniform distribution over an $L_2$ ball with radius $\rho$, i.e., $\|\epsilon\| \leq \rho$. This objective has been demonstrated to have superior empirical performance compared with the vanilla SAM formulation Eq.~(\ref{eq:sam}) for $0 < t < \infty$. When $t\to 0$, we have $F_t(x)\to \mathbb{E}_{\norm{\epsilon}\leq \rho}[f(x+\epsilon)]$, and optimizing it effectively corresponds to running gradient descent using the vanilla zeroth-order gradient estimators. As $t\to\infty$, $F_t(x)\to \max_{\norm{\epsilon}\leq \rho} f(x+\epsilon)$. We note that although ZEST optimizes a family of sharpness-regularized $t$-SAM objectives, there exist other sharpness-aware objectives and sharpness definitions that we leave for future work~\citep{universal,sabo}. % \tian{then we have to explain what is TSAM when it is first introduced in the `Baselines' paragraph} \xc{xc: Done. Is it clearer now?}

\section{Zeroth-Order Tilted Sharpness-Aware Learning} \label{sec:method}

In this section, we introduce our main zeroth-order algorithm for sharpness-aware learning. % zeroth-order gradient estimates for the $t$-SAM objective (Eq.~(\ref{eq:tsam})). %, so that various $t\in(0,\infty)$ impose different notions of sharpness  properties of the loss landscape near $x$. 
In Section~\ref{sec:gradient}, we first derive the a gradient estimate for the $t$-SAM objective that only relies on function evaluations. %the is equal to the first-order gradient formula of $t$-SAM when the expectation terms (over both the uniform ball and Gaussian distribution) are exactly computed. 
Next, in Section~\ref{sec:estimate}, we propose two ways to approximate the gradient estimate using a small finite number of model perturbations. %use two popular ratio estimates to compute the gradient when we can only sample a small finite number of perturbations in practice. 
Our complete algorithm is presented in Algorithm~\ref{alg:zest}.

\subsection{Tilted Zeroth-Order Gradient} \label{sec:gradient}

In this section, we formally present the zeroth-order gradient for the tilted objective. We note that the first-order gradient of the $t$-SAM objective is
$\nabla_x F_t(x) = \frac{\mathbb{E}_{\mu(\epsilon)}[e^{tf(x+\epsilon)}\nabla f(x+\epsilon)]}{\mathbb{E}_{\mu(\epsilon)}[e^{tf(x+\epsilon)}]}$. To obtain this with access to only function evaluations, our main step is to substitute the integration of gradients with the integration of function values. Therefore, we use the divergence theorem \citep{divergence} when the perturbation is sampled from a uniform ball and Stein's lemma \citep{stein} when the perturbation follows Gaussian. We have the following theorem to approximate $t$-SAM gradients.

\begin{theorem}[Tilted Zeroth-Order Gradient] \label{thm:ZEST}
Denote $\mathcal{N}\coloneqq \mathcal{N}(0, I_d)$, $\mathcal{S}\coloneqq \mathcal{U}(\sqrt{d}\mathbb{S}^{d-1})$, i.e., uniform distribution over the sphere $\{v\in \mathbb{R}^d: \norm{v}=\sqrt{d}\}$, and $\mathcal{B}\coloneqq \mathcal{U}(\sqrt{d}\mathbb{B}^d)$, i.e., uniform distribution over the ball $\{v\in \mathbb{R}^d: \norm{v}\leq\sqrt{d}\}$. %\xc{xc: Now I define them here. Is intro still overwhelming?} 
Denote $\rho$ as a perturbation scale. Let $f(x)<\infty$ and $t\in(0,\infty)$ such that $\int_v e^{tf(x+\rho v)} dv$ is integrable for any $x$ in the optimization trajectory with $v$ sampled from $\mathcal{N}$ or $\mathcal{B}$. Then the $t$-SAM objective (\ref{eq:tsam}) has unbiased zeroth-order gradients. Specifically, 

(1) with $F_t(x)=\frac{1}{t} \log \mathbb{E}_{v\sim\mathcal{N}} [e^{tf(x+\rho v)}]$, we have
\begin{align}
    \nabla_x F_t(x) = \frac{1}{t\rho}\frac{\mathbb{E}_{v\sim\mathcal{N}}[(e^{tf(x+\rho v)} - e^{tf(x-\rho v)}) v]}{\mathbb{E}_{v\sim\mathcal{N}}[e^{tf(x+\rho v)} + e^{tf(x-\rho v)}]};\label{eq:nn}
\end{align}
(2) with $F_t(x)=\frac{1}{t} \log \mathbb{E}_{v\sim\mathcal{B}} [e^{tf(x+\rho v)}]$, we have
\begin{align}
    \nabla_x F_t(x) & = \frac{1}{t\rho}\frac{\mathbb{E}_{v\sim\mathcal{S}}[(e^{tf(x+\rho v)} - e^{tf(x-\rho v)}) v]}{\mathbb{E}_{v\sim\mathcal{B}}[e^{tf(x+\rho v)} + e^{tf(x-\rho v)}]} \label{eq:sb} \\
    & \approx \frac{1}{t\rho}\frac{\mathbb{E}_{v\sim\mathcal{S}}[(e^{tf(x+\rho v)} - e^{tf(x-\rho v)}) v]}{\mathbb{E}_{v\sim\mathcal{S}}[e^{tf(x+\rho v)} + e^{tf(x-\rho v)}]}.\label{eq:ss}
\end{align}
\end{theorem}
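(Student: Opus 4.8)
The plan is to start from the exact first-order gradient of the tilted objective and then convert the gradient integral into a function-value integral using an integration-by-parts identity tailored to each perturbation distribution. Differentiating $F_t(x)=\frac{1}{t}\log\mathbb{E}_{\mu(\epsilon)}[e^{tf(x+\epsilon)}]$ under the integral sign (justified by the stated integrability of $\int_v e^{tf(x+\rho v)}\,dv$) gives the ratio $\nabla_x F_t(x)=\mathbb{E}_{\mu}[e^{tf(x+\epsilon)}\nabla f(x+\epsilon)]/\mathbb{E}_{\mu}[e^{tf(x+\epsilon)}]$ already quoted in the text. The whole task then reduces to rewriting the numerator $\mathbb{E}[e^{tf(x+\rho v)}\nabla f(x+\rho v)]$ in terms of $e^{tf}$ evaluated at perturbed points, using the key observation that $\nabla_v\big(e^{tf(x+\rho v)}\big)=t\rho\,e^{tf(x+\rho v)}\nabla f(x+\rho v)$, so the gradient-weighted exponential is, up to the factor $1/(t\rho)$, a pure $v$-derivative of $g(v):=e^{tf(x+\rho v)}$.

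For the Gaussian case I would invoke the multivariate Stein identity: for $v\sim\mathcal{N}(0,I_d)$ and differentiable $g$, $\mathbb{E}[v\,g(v)]=\mathbb{E}[\nabla_v g(v)]$. Applied to $g(v)=e^{tf(x+\rho v)}$ this yields $\mathbb{E}_{\mathcal{N}}[e^{tf(x+\rho v)}v]=t\rho\,\mathbb{E}_{\mathcal{N}}[e^{tf(x+\rho v)}\nabla f(x+\rho v)]$, turning the numerator into $\frac{1}{t\rho}\mathbb{E}_{\mathcal{N}}[e^{tf(x+\rho v)}v]$. To obtain the symmetric two-point form of Eq.~(\ref{eq:nn}) I would then use that $\mathcal{N}$ is invariant under $v\mapsto -v$: this gives $\mathbb{E}_{\mathcal{N}}[e^{tf(x+\rho v)}v]=\tfrac12\mathbb{E}_{\mathcal{N}}[(e^{tf(x+\rho v)}-e^{tf(x-\rho v)})v]$ for the numerator and $\mathbb{E}_{\mathcal{N}}[e^{tf(x+\rho v)}]=\tfrac12\mathbb{E}_{\mathcal{N}}[e^{tf(x+\rho v)}+e^{tf(x-\rho v)}]$ for the denominator; the factors of $\tfrac12$ cancel in the ratio, producing exactly Eq.~(\ref{eq:nn}).

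For the ball case I would replace Stein's identity with the divergence theorem. Writing the ball as $\mathbb{B}_R=\{\|v\|\le R\}$ and sphere $\mathbb{S}_R=\{\|v\|=R\}$ with $R=\sqrt d$, applying the divergence theorem to the field $g(v)e_i$ gives $\int_{\mathbb{B}_R}\partial_{v_i}g\,dv=\int_{\mathbb{S}_R}g\,(v_i/R)\,dS$, i.e. $\int_{\mathbb{B}_R}\nabla_v g\,dv=\frac1R\int_{\mathbb{S}_R}g\,v\,dS$. Converting the volume and surface integrals into $\mathcal{B}$- and $\mathcal{S}$-expectations and using $S_{d-1}=dV_d$ for the unit sphere and ball together with $R^2=d$, the dimensional constants collapse to give $\mathbb{E}_{\mathcal{B}}[e^{tf(x+\rho v)}\nabla f(x+\rho v)]=\frac{1}{t\rho}\mathbb{E}_{\mathcal{S}}[e^{tf(x+\rho v)}v]$ — this is precisely where the normalization $R=\sqrt d$ pays off. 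The same $v\mapsto -v$ symmetrization (on $\mathcal{S}$ for the numerator, on $\mathcal{B}$ for the denominator) then yields Eq.~(\ref{eq:sb}). Finally, Eq.~(\ref{eq:ss}) follows by replacing the ball expectation in the denominator with the sphere expectation, which I would justify by concentration of measure: under uniform sampling on $\sqrt d\,\mathbb{B}^d$ the radial density is $\propto r^{d-1}$, so the norm concentrates at $\sqrt d$ and $\mathbb{E}_{\mathcal{B}}\approx\mathbb{E}_{\mathcal{S}}$ in high dimension.

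I expect the main obstacle to be the analytic regularity rather than the algebra. Both Stein's identity and the divergence theorem require $g(v)=e^{tf(x+\rho v)}$ to be $C^1$ with controlled growth so that boundary contributions vanish (at infinity for the Gaussian, and correctly accounted on $\mathbb{S}_R$ for the ball), and differentiating under the integral needs a dominating integrable envelope. These are exactly the roles played by the hypotheses $f(x)<\infty$, $t\in(0,\infty)$, and the integrability of $\int_v e^{tf(x+\rho v)}\,dv$; the remaining care is in verifying that the boundary and tail terms genuinely vanish rather than merely assuming they do, and in making the concentration estimate behind Eq.~(\ref{eq:ss}) quantitative.
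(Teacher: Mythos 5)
Your proposal is correct and follows essentially the same route as the paper: Stein's identity for the Gaussian case, the divergence theorem on the radius-$\sqrt{d}$ ball for the uniform case (with the surface-to-volume ratio cancelling exactly because $R^2=d$), the $v\mapsto -v$ symmetrization to produce the two-point form, and concentration of the radial density $\propto r^{d-1}$ to justify replacing the ball expectation by the sphere expectation in Eq.~(\ref{eq:ss}). The constants and the handling of $\nabla_v$ versus $\nabla_x$ all check out against the paper's derivation.
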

We present the proofs in Appendix~\ref{app:proof} and make two remarks here. \underline{First}, as $t\to 0$, $t$-SAM reduces to the average-loss SAM objective $\mathbb{E}[f(x+\epsilon)]$, and our tilted zeroth-order gradient also reduces to the vanilla zeroth-order gradient. As $t\to\infty$, $t$-SAM approaches the max-loss SAM objective (Eq.~(\ref{eq:sam})), while Theorem~\ref{thm:ZEST} approaches the regime where integrability is not defined---it is expected since max-loss SAM is not differentiable. \underline{Second}, from Eq.~(\ref{eq:sb})  to Eq. (\ref{eq:ss}), we change the denominator from taking expectation over the ball $\mathcal{B}$ to over the sphere $\mathcal{S}$. This reduces computational cost by using the same sampled perturbations on the sphere to compute both the numerator and the denominator (Section~\ref{sec:estimate}). Theoretically, the bias of this approximation is controlled by $O(1/\sqrt{d})$ as most of the volume of a high-dimensional ball is concentrated near its boundary (the sphere).

\subsection{Estimates of Ratio-of-Expectations} \label{sec:estimate}

Our proposed tilted zeroth-order gradients (Eq.~(\ref{eq:nn}$-$\ref{eq:ss})) compute the ratio of expectations w.r.t. the sampled perturbations, but we can only sample a finite number of perturbations in practice. Given $k$ sampled perturbations $\{v_i\}_{i\in [k]}$ along with their function evaluations $\{e^{tf(x+\rho v_i)}\}_{i \in [k]}$ and $\{e^{tf(x-\rho v_i)}\}_{i\in [k]}$, our goal is to estimate $\frac{\mathbb{E}_v \left[\left(e^{tf(x+\rho v)} - e^{tf(x-\rho v)}\right) v\right]}{\mathbb{E}_v \left[e^{tf(x+\rho v)} + e^{tf(x-\rho v)}\right]}$ (Eq.~(\ref{eq:nn}$-$\ref{eq:ss})). In order words, if we denote $A_i = (e^{tf(x+\rho v_i)} - e^{tf(x-\rho v_i)}) v_i$ and $B_i = e^{tf(x+\rho v_i)} + e^{tf(x-\rho v_i)}$ for $ i\in[k]$, then we would like to estimate $\frac{\mathbb{E}[A]}{\mathbb{E}[B]}$, which is a ratio-of-expectation estimation problem. 
% Our problem is formulated as: In each iteration, we sample $k$ perturbations and compute $e^{tf(x+\rho v_i)}$ and $e^{tf(x-\rho v_i)}$, $i\in[k]$. We aim to compute the finite-perturbation estimate of Eq.~(\ref{eq:nn}$-$\ref{eq:ss}), which boils down to estimate 
%$\frac{\mathbb{E}[A]}{\mathbb{E}[B]}$ with samples $A_i = (e^{tf(x+\rho v_i)} - e^{tf(x-\rho v_i)}) v_i$ and $B_i = e^{tf(x+\rho v_i)} + e^{tf(x-\rho v_i)}, i\in[k]$. This is a ratio-of-expectation estimation problem. 

There are multiple well-studied ratio estimates in  statistics \citep{Tin}. In this section, we derive two economic choices, discuss their bias, and present our ZEST algorithm that leverages finite-perturbation estimates. For notation brevity, we denote $a_i^+=e^{tf(x+\rho v_i)}$, $a_i^-=e^{tf(x-\rho v_i)}$, and $Z=\sum_{i\in[k]} a_i^+ + a_i^-$. We denote the normalized values as $\bar{a}_i^+ \coloneqq a_i^+/Z$ and $\bar{a}_i^- \coloneqq a_i^-/Z$.

\paragraph{Naive Plug-In.} A natural ratio estimate is $\frac{\bar{A}}{\bar{B}}$ where $\bar{A}$ and $\bar{B}$ are the sample means for the current iteration. Therefore, we sample $\{v_i\}_{i\in[k]}$ from the given perturbation distribution and compute the sample mean of the numerator and denominator, respectively, which gives us
\begin{align}
    G_{\text{N}}^k \coloneqq \frac{1}{t\rho} \frac{\sum_{i=1}^k  A_i}{\sum_{i=1}^k  B_i}  = \frac{1}{t\rho} \sum_{i=1}^k (\bar{a}_i^+ - \bar{a}_i^-) v_i. \label{eq:naive}
\end{align}
Note that due to $\mathbb{E}[\frac{\bar{A}}{\bar{B}}]\neq \frac{\mathbb{E}[A]}{\mathbb{E}[B]}$ by Jensen's inequality, the naive plug-in is only asymptotically unbiased. When $k<\infty$, its bias reduces at rate $O(1/k)$ \citep{Beale}.

% Note that in practice, we use 
% $$ G_S^k = \frac{e^{-C} \sum_{i=1}^k (e^{tf(x+\rho v_i)} - e^{tf(x-\rho v_i)})v_i}{t\rho e^{-C}\sum_{i=1}^k (e^{tf(x+\rho v_i)} + e^{tf(x-\rho v_i)})} = \frac{\sum_{i=1}^k (e^{tf(x+\rho v_i)-C} - e^{tf(x-\rho v_i)-C})v_i}{t\rho \sum_{i=1}^k (e^{tf(x+\rho v_i)-C} + e^{tf(x-\rho v_i)-C})} $$
% by choosing the appropriate $C>0$ to avoid numerical overflow when $t$ is large.

\paragraph{Bias-Corrected Plug-In.} Due to the constraint of small $k$'s  in practice, we derive a bias-corrected estimator, following \citet{van2000mean}. The Taylor expansion of $\mathbb{E}[\frac{\bar{A}}{\bar{B}}]$ gives us $\mathbb{E}[\frac{\bar{A}}{\bar{B}}] \approx \frac{\mathbb{E}[\bar{A}]}{\mathbb{E}[\bar{B}]} + \text{bias}$. By using the estimate with the bias term subtracted, we have
\begin{align}
     G_{\text{BC}}^k \coloneqq \frac{1}{t\rho} \sum_{i=1}^k \left\{1+\frac{k}{k-1}[\bar{a}_i^+ + \bar{a}_i^- - \sum_{i=1}^k (\bar{a}_i^+ + \bar{a}_i^-)^2] \right\} (\bar{a}_i^+ - \bar{a}_i^-) v_i, \label{eq:BC}
\end{align}
and the complete derivation of the bias term is in Appendix~\ref{app:proof:BC}. $G_{\text{BC}}^k$ has an improved bias reduction rate $O(1/k^2)$ \citep{BC} and has the same memory/computational complexity as the vanilla zeroth-order gradient estimator, because the computation and storage cost of $k$ exponential loss values is negligible. 

With the above two options derived, we introduce our ZEST algorithm and present its memory-efficient implementation in Algorithm~\ref{alg:zest}. In each iteration, we first sample $k$ perturbations iteratively using random seeds and record the normalized tilted loss values (Line 3-7). For memory efficiency, the perturbations will be deleted once these loss values are computed. Next, we obtain the weight for each perturbation using the chosen ratio estimate (Line 8-9). Finally, we re-generate the perturbations via the same random seeds and update the model parameters (Line 10-13). Since we sample and recover the perturbations in place without storing them in memory, ZEST is more memory-efficient than the first-order optimizer for $t$-SAM \citep{tsam}. See a detailed memory analysis in Section~\ref{sec:exp}. %\xc{xc: I add more explanation for the memory-efficient implementation using random seeds. Is it clearer?}

\SetKwInOut{Input}{Input}
\SetKwFunction{func}{Perturb}
\SetKwComment{Comment}{// }{}

\setlength{\algomargin}{1.5em}
\begin{algorithm} 
\DontPrintSemicolon
\caption{ZEST}
\label{alg:zest}
\Input{ $x \in \mathbb{R}^d$, tilting parameter $t$, perturbation scale $\rho$, number of queries $k$, learning rate $\eta$}
\vspace{0.3em}
\For{\textup{each iteration}}{
    Sample a batch of training data $\mathcal{D}$ and seeds $\{s_i\}_{i\in[k]}$\;
    \vspace{0.2em}
    \For{$i = 1, \cdots, k$}{
        Sample $v_i \sim \mathcal{N}(0, I_d)$ or $\mathcal{U}(\sqrt{d}\mathbb{S}^{d-1})$ based on seed $s_i$\;
        Compute $a_i^+ \gets e^{tf(x + \rho v_i;\mathcal{D})}$, $a_i^-\gets e^{tf(x - \rho v_i;\mathcal{D})}$ \;
    }
    \vspace{0.2em}
    Compute $Z \gets \sum_{i=1}^k a_i^+ + a_i^-$ and $\bar{a}_i^+ \gets a_i^+/Z$, $\bar{a}_i^- \gets a_i^-/Z$ for $i\in[k]$ \;
    Compute $w_i$ for $i\in[k]$ by\;
    \begin{optionbox1}
    \textbf{Option 1 (Naive):} $w_i \gets \bar{a}_i^+ - \bar{a}_i^-$
    \end{optionbox1}
    \begin{optionbox2}
    \textbf{Option 2 (Bias-corrected):} $w_i\gets \left\{1+\frac{k}{k-1}[\bar{a}_i^+ + \bar{a}_i^- - \sum_{i=1}^k (\bar{a}_i^+ + \bar{a}_i^-)^2] \right\} (\bar{a}_i^+ - \bar{a}_i^-)$
    \end{optionbox2} \;
    \vspace{0.2em}
    \For{$i = 1, \cdots, k$}{
        Recover $v_i \sim \mathcal{N}(0, I_d)$ or $\mathcal{U}(\sqrt{d}\mathbb{S}^{d-1})$ based on seed $s_i$\;
        $x \gets x - \eta (w_i/t\rho) * v_i$
    }
}
\end{algorithm}
\section{Sharpness Notions} \label{sec:sharpness}

In this section, we analyze the explicit bias (i.e., sharpness notions) of the $t$-SAM objective under both Gaussian (Section~\ref{sec:Rt:N}) and uniform ball perturbation (Section~\ref{sec:Rt:B}). 
Recall that updating via the vanilla zeroth-order gradient estimator is essentially minimizing $\mathbb{E}_{v}[f(x+\rho v)]$, which can be decomposed to the empirical loss $f(x)$ term plus a sharpness regularization term $R_{\text{avg}}\propto \text{Tr}(\nabla^2 f(x))$ (Section~\ref{sec:related}). 
We decompose the $t$-SAM objective into 
\begin{align*}
    F_t(x) = f(x) + R_t(x) + O(\rho^2d)
\end{align*}
where $f(x)$ is the empirical loss, $R_t(x)$ is the regularizer (used as our sharpness notion) dependent on $t$, and $O(\rho^2d)$ is the Taylor expansion error that can be controlled by taking proper $\rho$'s. Across two perturbation distributions, we show that as $t\to 0$, $R_t$ reduces to $R_{\text{avg}}$; as $t$ increases, $R_t$ increasingly relies on the gradient component in the top eigenspace of the Hessian $\nabla^2 f(x)$ and its top eigenvalues; as $t\to\infty$ (when admissible), $R_t$ exclusively relies on the gradient component projected to the first Hessian eigenvector and the largest eigenvalue. Therefore, our regularizer $R_t$ represents a spectrum of sharpness notions that promote ``flatter'' solutions. In Section~\ref{sec:toy}, we present a low-dimensional toy problem to illustrate (1) the different convergence behaviors of ZEST in contrast to vanilla zeroth-order methods due to different sharpness notions and (2) when and how our notion is superior. 

In the following, we start by defining \textit{sharpness sensitivity}, a notion that describes how the value of an eigenvalue impacts the sharpness regularizer $R_t$.

\begin{definition}[Sharpness Sensitivity]
The dependence of sharpness $R_t$ on $x$ can be re-expressed as its dependence on the Hessian eigenvalues $\{\lambda_i\}_{i=1}^d$ and the components of $\nabla f(x)$ in the eigenspace. We define the sharpness sensitivity to an arbitrary $\lambda_i$ as 
\begin{align}
    \phi_i(t) \coloneqq \frac{\partial R_t}{\partial \lambda_i}, \label{eq:phi_i}
\end{align}
which indicates how much impact the value of an arbitrary $\lambda_i$ has on the value of $R_t$.
\end{definition}
We note that if $\phi_i$ increases as $\lambda_i$ increases, $R_t$ is more dominated by large eigenvalues. Alternatively, if $\phi_i$ remains the same regardless of the value of $\lambda_i$, $R_t$ penalizes each eigenvalue equally and thus favors solutions with  small \textit{average} eigenvalues. With this quantity, we analyze $R_t$ when the perturbation is sampled from $\mathcal{N}(0,I_d)$ (denoted as $\mathcal{N}$) and $\mathcal{U}(\sqrt{d}\mathbb{B}^d)$ (denoted as $\mathcal{B}$) as follows.

\subsection{Gaussian Perturbation} \label{sec:Rt:N}

We derive the regularizer under Gaussian perturbation in this section. The Taylor expansion of $f(x+\rho v)$ with $v\sim\mathcal{N}$ is
\begin{align*}
     f(x+\rho v) & = f(x) + \rho \nabla f(x)^{\top}v + \frac{\rho^2}{2} v^{\top} \nabla^2f(x)v + O(\rho^2 \norm{v}^2),
\end{align*}
where $O(\rho^2\norm{v}^2)=O(\rho^2d)$ with high probability for large $d$. Therefore, with high probability, the Taylor expansion of the $t$-SAM objective is
\begin{align}
F_t(x) = f(x) + \underbrace{\frac{1}{t} \log \mathbb{E}_{v\sim\mathcal{N}}\left[\exp\left(t[\rho \nabla f(x)^{\top}v + \frac{\rho^2}{2} v^{\top} \nabla^2f(x)v ]\right)\right]}_{\text{ Sharpness } R_t} + O(\rho^2d).
\end{align}
We decompose Hessian $\nabla^2 f(x)$ into $\nabla^2 f(x) = Q^{\top}\Lambda Q$, where $Q$ is orthogonal with columns $\{e_1, \ldots, e_d\}$ that are ordered Hessian eigenvectors, and $\Lambda = \text{diag}(\lambda_1, \lambda_2, \ldots, \lambda_d)$ where $\lambda_1\geq \ldots \geq \lambda_d$ are the ordered Hessian eigenvalues. Denote $g \coloneqq Q\nabla f(x)$ and thus $g_i$ is the component of the gradient along the $i$-th eigenvector. %Then we have
% \begin{align*}
%     R_t = \frac{1}{2t} \sum_{i=1}^d \left[\frac{(t\rho g_i)^2}{1-t\rho^2\lambda_i} - \log(1-t\rho^2\lambda_i)\right] \numberthis \label{eq:Rt:N}
% \end{align*}
% under the condition that for any $i$, $1-t\rho^2 \lambda_i > 0$, i.e., we choose $t\rho^2 < \frac{1}{\lambda_{\text{max}}}$ if $\lambda_{\text{max}} > 0$. We present the complete derivation in Appendix~\ref{app:proof:Rt:N}.
Then we have the following theorem for $R_t$ with proof in Appendix~\ref{app:proof:Rt:N}.
\begin{theorem}[Sharpness under Gaussian Perturbation] \label{thm:Rt:N}
Under Gaussian perturbation, if we choose $\rho$ such that $1-t\rho^2 \lambda_i > 0$ holds for any $i$, then we have
    \begin{align*}
    R_t = \frac{1}{2t} \sum_{i=1}^d \left[\frac{(t\rho g_i)^2}{1-t\rho^2\lambda_i} - \log(1-t\rho^2\lambda_i)\right]. \numberthis \label{eq:Rt:N}
\end{align*}
\end{theorem}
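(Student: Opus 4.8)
The plan is to recognize the sharpness term $R_t = \frac{1}{t}\log \mathbb{E}_{v\sim\mathcal{N}}[\exp(t\rho \nabla f(x)^{\top}v + \frac{t\rho^2}{2} v^{\top}\nabla^2 f(x)v)]$ as $1/t$ times the logarithm of the moment generating function of a quadratic form in a standard Gaussian vector, and to evaluate that MGF in closed form. After rotating into the Hessian eigenbasis, the whole computation collapses to $d$ independent one-dimensional Gaussian integrals, each of which I can evaluate by completing the square.

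First I would perform the orthogonal change of variables $w = Qv$. Since $Q$ is orthogonal and $v\sim\mathcal{N}(0,I_d)$, rotation invariance of the standard Gaussian gives $w\sim\mathcal{N}(0,I_d)$ with unchanged integration measure. Under this substitution the linear term becomes $\nabla f(x)^{\top}v = g^{\top}w$ with $g = Q\nabla f(x)$, and the quadratic form diagonalizes as $v^{\top}\nabla^2 f(x)v = w^{\top}\Lambda w = \sum_i \lambda_i w_i^2$. Because the coordinates $w_i$ are independent, the expectation factorizes into a product over $i$ of univariate integrals of the form $\mathbb{E}_{w_i}[\exp(t\rho g_i w_i + \frac{t\rho^2}{2}\lambda_i w_i^2)]$.

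Next I would evaluate each univariate integral. Absorbing the standard Gaussian weight into the quadratic term yields an effective exponent $-\frac{1}{2}(1 - t\rho^2\lambda_i)w_i^2 + t\rho g_i w_i$, and this is precisely where the hypothesis $1 - t\rho^2\lambda_i > 0$ is used: it keeps the quadratic coefficient negative so the integral converges, and it is equivalent to the integrability assumption of Theorem~\ref{thm:ZEST}. Completing the square produces the exponent $\frac{(t\rho g_i)^2}{2(1-t\rho^2\lambda_i)}$ together with a normalizing factor $(1-t\rho^2\lambda_i)^{-1/2}$ coming from the residual Gaussian constant.

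Finally I would take the logarithm, converting the product over $i$ into a sum, and multiply by $1/t$; collecting the squared-gradient contribution and the $-\frac{1}{2}\log$ contribution from each factor gives the claimed $R_t = \frac{1}{2t}\sum_{i=1}^d [\frac{(t\rho g_i)^2}{1-t\rho^2\lambda_i} - \log(1-t\rho^2\lambda_i)]$. I expect no genuine obstacle beyond careful bookkeeping: the only points requiring attention are checking that the positivity condition is exactly what makes each Gaussian integral well-defined, and noting that the higher-order Taylor remainder has already been separated into the $O(\rho^2 d)$ term, so that this evaluation of $R_t$ is exact rather than approximate.
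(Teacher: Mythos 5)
Your proposal is correct and follows essentially the same route as the paper's proof: rotate into the Hessian eigenbasis via $u=Qv$ using rotational invariance of the standard Gaussian, factorize the expectation into $d$ univariate Gaussian integrals, complete the square in each (which is exactly where $1-t\rho^2\lambda_i>0$ is needed for convergence), and take $\frac{1}{t}\log$ of the resulting product. No substantive differences to flag.
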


We see that as $t\to 0$, we have $\lim_{t\to 0} R_t = \frac{\rho^2}{2} \sum_{i=1}^d \lambda_i = R_{\text{avg}}$, which is consistent with existing work \citep{wen2022does, universal}. As $t$ increases, the regularizer sensitivity $\phi_i(t)$ satisfies
\begin{align*}
    \phi_i(t) = \frac{\rho^2}{2(1-t\rho^2\lambda_i)}\left(\frac{t^2\rho^2g_i^2}{1-t\rho^2g_i}+1\right) > 0 \text{ for valid } \rho.
\end{align*}
It implies that the sensitivity of $R_t$ to $\lambda_i$ depends on $t$. When $t=0$, the sensitivity is the constant $\rho^2/2$, that is, each eigenvalue contributes the same to $R_t$. As $t$ increases, the sensitivity increases, meaning that $R_t$ will be more dominated by large eigenvalues.

\subsection{Ball Perturbation} \label{sec:Rt:B}

Apart from the Gaussian perturbation, we analyze the regularizer and explicit bias under the ball perturbation in this section. Since the Taylor expansion error of $f(x+\rho v)$ with $v\sim\mathcal{U}(\sqrt{d}\mathbb{B}^d)$ is $O(\rho^2d)$ (due to $\norm{v}^2 \leq d$), $F_t(x)$ can be decomposed into
\begin{align*}
     F_t(x) = f(x) + \underbrace{\frac{1}{t} \log \mathbb{E}_{\mathcal{U}(\sqrt{d}\mathbb{B}^d)}\left[\exp\left(t[\rho \nabla f(x)^{\top}v + \frac{\rho^2}{2} v^{\top} \nabla^2f(x)v]\right)\right]}_{\text{ Sharpness } R_t} + O(\rho^2 d)
\end{align*}
% Since $h(X) = \frac{1}{t} \log(\mathbb{E}[\exp(tX)])$ is continuous for $t\in\{t:\mathbb{E}[\exp(tX)]<\infty\}$ and $\mathbb{E}[\exp(tX)]<\infty$ for any $t < \infty$ due to bounded $X$, $h(X)$ is continuous and non-decreasing for any $t<\infty$. Therefore, we focus on analyzing the two extreme cases that $t\to 0$ and $\infty$. \xc{We miss the discussion on how sensitivity changes w.r.t. $t$.} When $t\to 0$, we have 
% \begin{align}
%       \lim_{t\to 0} R_t = \frac{\rho^2d}{2(d+2)} \sum_{i=1}^d \lambda_i. \label{eq:R0:B}
% \end{align}
% which recovers the sharpness of the vanilla zeroth-order methods $R_{\text{avg}}$, i.e., a simple average of the eigenvalues. Furthermore, when $t\to \infty$, we apply Laplace's principle (proved in Lemma~\ref{lemma:laplace}; \citet{laplace}) and obtain 
% \begin{align*}
%    \lim_{t\to \infty} R_t = \max_{\norm{u} \leq \sqrt{d}} \underbrace{\rho g^{\top}u}_{\text{Slope penalty}} + \underbrace{\frac{\rho^2}{2}u^{\top}\Lambda u}_{\text{Curve penalty}}.\numberthis \label{eq:Rt:B:general} 
% \end{align*}
and we have the following theorem for $R_t$, whose complete derivation is in Appendix~\ref{app:proof:Rt:B}.
\begin{theorem}[Sharpness under Ball Perturbation] \label{thm:Rt:B}
Assume that $\norm{\nabla f(x)} < \infty$ and $\nabla^2 f(x)$ has bounded eigenvalues for any $x$ in our optimization trajectory. Under ball perturbation, $R_t$ is continuous and non-decreasing in $t$ for any $t<\infty$, and the regularizer sensitivity $\phi_i$ is continuous and non-decreasing in $\lambda_i$. Therefore, we analyze two extreme cases. When $t\to 0$, 
\begin{align}
      \lim_{t\to 0} R_t = \frac{\rho^2d}{2(d+2)} \sum_{i=1}^d \lambda_i, \label{eq:R0:B}
\end{align}
which recovers the sharpness of the vanilla zeroth-order methods $R_{\text{avg}}$, i.e., a simple average of eigenvalues. When $t\to \infty$, we have
\begin{align*}
   \lim_{t\to \infty} R_t := R_{\infty} = \max_{\norm{u} \leq \sqrt{d}} \underbrace{\rho g^{\top}u}_{\text{Slope penalty}} + \underbrace{\frac{\rho^2}{2}u^{\top}\Lambda u}_{\text{Curve penalty}}.\numberthis \label{eq:Rt:B:general} 
\end{align*}
\end{theorem}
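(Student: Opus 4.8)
The plan is to recognize $R_t$ as a rescaled cumulant generating function and then invoke standard properties of the map $t\mapsto\frac{1}{t}\log\mathbb{E}[e^{tX}]$. First I would rotate into the Hessian eigenbasis: since the uniform ball law $\mathcal{B}$ is invariant under the orthogonal change of variables $v\mapsto Qv$, writing $g=Q\nabla f(x)$ gives
\[
R_t=\frac{1}{t}\log\mathbb{E}_{v\sim\mathcal{B}}\!\left[e^{tX}\right],\qquad X:=\sum_{i=1}^d\Bigl(\rho g_i v_i+\tfrac{\rho^2}{2}\lambda_i v_i^2\Bigr).
\]
The key structural observation is that $X$ is a fixed quadratic in $v$, hence bounded and continuous on the compact ball $\sqrt{d}\mathbb{B}^d$ under the assumptions $\norm{\nabla f(x)}<\infty$ and bounded eigenvalues. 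Consequently $M(t):=\log\mathbb{E}[e^{tX}]$ is finite, smooth, and convex in $t$ for every finite $t$, with $M(0)=0$, which lets me differentiate under the integral freely by dominated convergence.

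For monotonicity and continuity in $t$, I would write $R_t=M(t)/t$ and compute $\frac{d}{dt}R_t=(tM'(t)-M(t))/t^2$. Convexity of $M$ gives the tangent-line bound $M(0)\ge M(t)+M'(t)(0-t)$, i.e.\ $tM'(t)\ge M(t)$ since $M(0)=0$; hence $R_t$ is non-decreasing, and smoothness of $M$ yields continuity (with a removable limit at $t=0$). For the sensitivity claim I would pass to the tilted distribution $\emp\propto e^{tX}$ on the ball. Using $\partial X/\partial\lambda_i=\tfrac{\rho^2}{2}v_i^2$, differentiation gives $\phi_i(t)=\tfrac{\rho^2}{2}\mathbb{E}_{\emp}[v_i^2]>0$, and a second differentiation (a covariance identity for tilted expectations) yields $\partial\phi_i/\partial\lambda_i=\tfrac{t\rho^4}{4}\Var_{\emp}(v_i^2)\ge 0$, so $\phi_i$ is continuous and non-decreasing in $\lambda_i$.

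For the two extremes I would use the classic endpoints of the tilted mean. As $t\to 0$, $R_t\to\mathbb{E}_{\mathcal{B}}[X]$; by symmetry $\mathbb{E}_{\mathcal{B}}[v]=0$, and a one-line radial integral gives $\mathbb{E}_{\mathcal{B}}[v_i^2]=\tfrac{1}{d}\mathbb{E}_{\mathcal{B}}[\norm{v}^2]=\tfrac{d}{d+2}$, producing exactly Eq.~(\ref{eq:R0:B}). As $t\to\infty$, $R_t\to\operatorname{esssup}X$; since $X$ is continuous on the compact ball, its essential supremum under the positive uniform density equals $\max_{\norm{u}\le\sqrt{d}}X(u)$, which is precisely $R_\infty$ in Eq.~(\ref{eq:Rt:B:general}).

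The main obstacle lies in the two limiting statements. The $t\to\infty$ limit is the ``soft-max $\to$ max'' convergence of the log-sum-exp form, and the only thing that can go wrong is that the maximizer generically sits on the boundary sphere whenever $\Lambda$ has a positive eigenvalue. I would handle this by noting that continuity of $X$ together with positivity of the uniform density guarantees $P_{\mathcal{B}}(X>b-\epsilon)>0$ for every $\epsilon>0$ (with $b=\max X$), which drives the lower bound $\frac{1}{t}\log\mathbb{E}[e^{tX}]\ge b-\epsilon+\frac{1}{t}\log P_{\mathcal{B}}(X>b-\epsilon)\to b-\epsilon$; the matching upper bound $\frac{1}{t}\log\mathbb{E}[e^{tX}]\le b$ is immediate. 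Everything else—the monotonicity via convexity, the sensitivity via variance non-negativity, and the $t\to 0$ moment computation—is routine once the cumulant-generating-function and tilted-distribution viewpoint is in place.
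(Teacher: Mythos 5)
Your proposal is correct and follows essentially the same route as the paper: rotation into the Hessian eigenbasis, the tilted-expectation/Cauchy--Schwarz (variance) argument for the sensitivity $\phi_i$, the mean of $X$ with $\mathbb{E}_{\mathcal{B}}[v_i^2]=\tfrac{d}{d+2}$ for $t\to 0$, and Laplace's principle (with the identical positive-measure-sublevel-set lower bound) for $t\to\infty$. The only place you go beyond the paper is in actually justifying the monotonicity of $R_t$ in $t$ via convexity of the cumulant generating function and the tangent-line bound $tM'(t)\ge M(t)$, where the paper merely asserts this standard fact.
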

Theorem~\ref{thm:Rt:B} indicates that as $t\to\infty$, $R_t$ \textit{pessimistically} regularizes the objective $f(x)$ so that we favor the \textit{flat} solution $\hat{x}$ where $f(\hat{x})$ has neither highly curved directions nor large slopes along the curved directions. We discuss the penalties Eq. (\ref{eq:Rt:B:general}) specifically in three regimes.

\textbf{Linear regime.} If $f(x)$ is piecewise-linear within the search space for the next iteration $x^{\prime}$, the curve penalty is zero and $R_{\infty}$ depends solely on $\max\{g^{\top}u:\norm{u}\leq\sqrt{d}\}= \sqrt{d}\norm{g} = \sqrt{d}\norm{\nabla f(x)}$ with $u^{\star}=\sqrt{d}g/\norm{g}$. Therefore, $F_t$ biases against the next iterations with steep slopes (gradients).

\textbf{Stationary regime.} If $f(x)$ has multiple local minima as candidates for the next iteration, the curve penalties for them are all zero and $R_{\infty}$ depends only on $\max\{u^{\top}\Lambda u: \norm{u}\leq\sqrt{d}\} = \sqrt{d}\max(\lambda_1, 0)$ with $u^{\star}=\sqrt{d}e_1$. Therefore, $F_t$ biases against next iterations with large curvature in \textit{any} direction.

% $$R_{\infty} = \frac{\rho^2}{2} \max_{\norm{u} \leq \sqrt{d}}u^{\top}\Lambda u = \frac{\rho^2d}{2}\max(\lambda_1, 0),$$

%  biases for the minima that have the smallest $\lambda_{\text{max}}(\nabla^2f(x))$ since

\textbf{General case.} When both the curve and slope penalties are active, we use KKT conditions to solve Eq.~(\ref{eq:Rt:B:general}) in Appendix~\ref{app:proof:Rt:B:general}. We have that when $\nabla^2 f(x) \npreceq 0$, 
\begin{enumerate}
    \item Gradient–curvature co-alignment plays a critical role. Only eigen directions with nonzero gradient projection ($g_i \neq 0$) influence $R_t$, and the influence grows with both $|g_i|$ and $\lambda_i$.
    \item The largest positive eigenvalues dominate if the gradient points there, i.e., when $g$ has projections on the top-eigenvector(s), those eigenvalues have the largest impact on $R_t$.
\end{enumerate}
We make two comments based on the above results. \underline{First}, when $t\to 0$, our regularizer $R_t$ recovers $R_{\text{avg}}$ of the average-loss SAM objective under both Gaussian and uniform ball perturbations. As $t$ increases, regularizer sensitivity increases and thus the penalty from each eigenvalue changes from uniformity to dominance by $\lambda_{\text{max}}$. \underline{Second}, under ball perturbation (where the max-loss SAM objective is well-defined), as $t\to\infty$, our regularizer in the general case is consistent with the work of \citet{wen2022does} and consistent with \citet{universal} in the stationary regime. 

Furthermore, we discuss how hyperparameters such as $\rho$ and $d$ influence the effective choices of $t$ in Appendix~\ref{app:proof:choose_t}. In the following section, we present two low-dimensional examples that correspond to the linear and stationary regimes, respectively, to illustrate the effects of different biases introduced by $R_t$ in contrast to $R_{\text{avg}}$.

\subsection{Low-Dimensional Examples} \label{sec:toy}

We illustrate the benefit of ZEST and its sharpness notions through 2D examples for the linear and stationary regimes (details in Appendix~\ref{app:toy}). For the linear regime, we create a piecewise-linear loss function with one minimum (Figure~\ref{fig:piecewiselinear}). There are multiple routes to reach the minimum, some steep (with large slopes/gradient norms, lightly colored) and some flat (with small slopes, darkly colored). We observe that though both ZEST and the vanilla zeroth-order algorithm (MeZO by~\citet{mezo}) approach the minimum, ZEST identifies and chooses the flatter route (with darker planes) while MeZO chooses the steep trajectory. 

For the stationary regime, we present an $f$ with two local minima, $(\pm 1,0)$ such that $f(\pm 1,0)=0$ (Figure~\ref{fig:stationary}). Denoting the Hessian of $f$ at point $(x,y)$ as $H(x,y)$, we have the eigenvalues of $H(1,0)$ as $\{\frac{12}{5}, \frac{2}{5}\}$ and those of $H(-1,0)$ as $\{\frac{10}{5}, \frac{4}{5}\}$. Since the two minima have the same average of eigenvalues (trace), optimizers with sharpness defined as $R_{\text{avg}}$, such as MeZO, would treat these minima equally sharp. However, the fact that $\lambda_{\text{max}}[H(1,0)] > \lambda_{\text{max}}[H(-1,0)]$ indicates that there exist perturbation directions that substantially impair model utility if it reaches $(1,0)$, which should be avoided in critical applications. Noticeably, we observe that ZEST can avoid the pitfall of $(1,0)$ and arrive at $(-1,0)$ despite their identical loss value and Hessian trace. Being sensitive to $\lambda_{\text{max}}$, ZEST favors next iterates that are flat in \textit{any} direction, which is consistent with our analysis.

\begin{figure*}
    \centering
    \begin{subfigure}[b]{0.61\textwidth}
    \includegraphics[width=\textwidth]{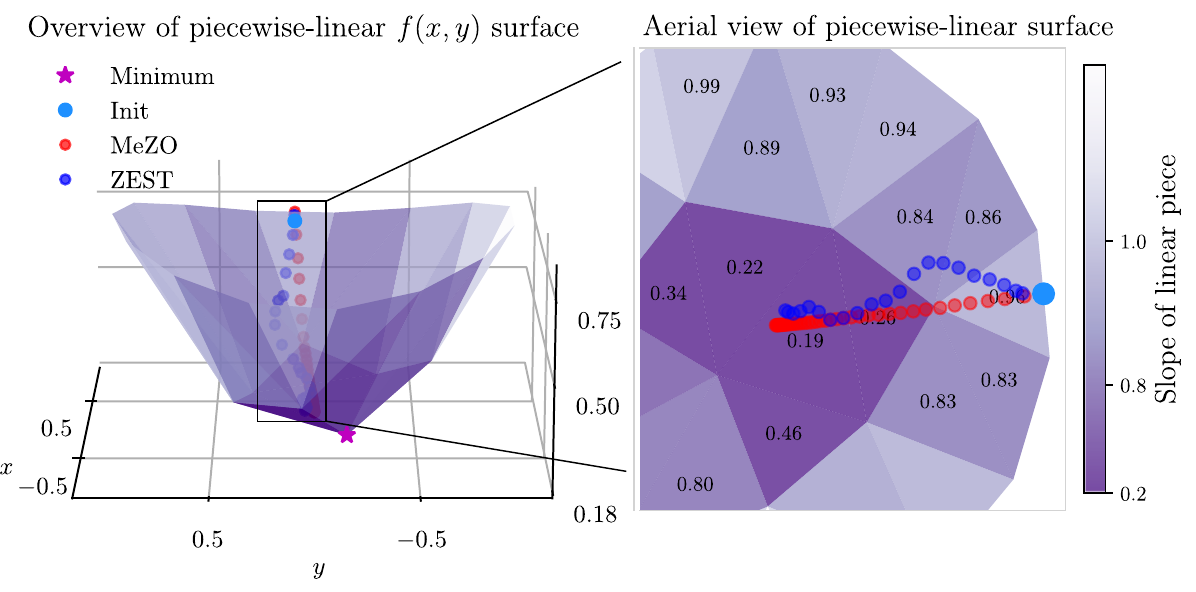}
    \caption{Convergence of different methods that prefer trajectories with different slopes. MeZO does not have a slope regularizer while ZEST identifies flat next iterations with smaller slopes (gradients). The color and the value of each triangle indicate its slope, with \textit{darker} indicating \textit{flatter}.}
    \label{fig:piecewiselinear}
    \end{subfigure}
    \hspace{0.3em}
    \begin{subfigure}[b]{0.35\textwidth} 
    \includegraphics[width=\textwidth]{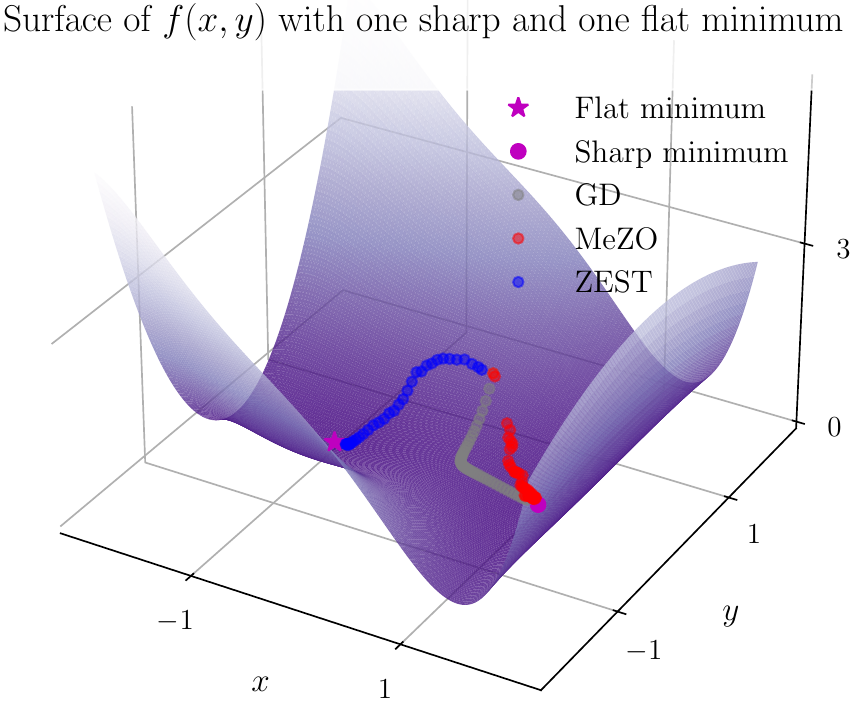}
    \caption{Convergence of different methods when two minima have the same loss value and average eigenvalues. GD and MeZO converge to the sharp minimum (with larger $\lambda_{\text{max}}$); ZEST converges to the flat one (smaller $\lambda_{\text{max}}$).}
    \label{fig:stationary}
    \end{subfigure}
    \caption{Convergence behaviors of different methods on examples for the (a) linear and (b) stationary regimes. It shows that (1) MeZO can make steep steps while ZEST identifies flat next iterations, and (2) MeZO can converge to minima with a large $\lambda_{\max}$ while ZEST explicitly biases against $\lambda_{\max}$.}
\end{figure*}

% \begin{figure*}
%     \centering
%     \includegraphics[width=\textwidth]{figs/piecewise_purple.pdf}
%     \caption{Convergence of different methods that prefer trajectories with different slopes. MeZO does not have a slope regularizer, while ZEST identifies flat next iterations with smaller slopes (gradients). The color and the value of each triangle indicate its slope, with \textit{darker} indicating \textit{flatter}.}
% \end{figure*}
\section{Experiments} \label{sec:exp}

We conduct experiments on masked language models (LMs) (RoBERTa-base \citep{roberta}) on GLUE classification tasks in Section~\ref{sec:roberta} and autoregressive LMs (OPT-1.3B \citep{optpaper}) on multiple choice and generation tasks in Section~\ref{sec:opt}. We focus on many-shot fine-tuning with prompts, following prior zeroth-order literature \citep{mezo,dpzero,ZOO}. Across diverse tasks and model types, ZEST is a computationally and memory-efficient alternative to first-order approaches and outperforms the vanilla zeroth-order baseline MeZO. In Section~\ref{sec:flat}, we evaluate the flatness of ZEST solutions under multiple sharpness definitions. In Section~\ref{sec:sensitivity}, we discuss the effects of the tilting hyperparameter $t$ and provide practical guidance for choosing $t$.

\paragraph{Baselines.} For first-order baselines, we perform full-parameter fine-tuning to minimize ERM $f(x)$, the average-loss objective $\mathbb{E}[f(x+\epsilon)]$, the max-loss objective $\max_{\epsilon} f(x+\epsilon)$, and $t$-SAM $F_t(x)$. 
These objectives are solved via SGD, ESAM \citep{wen2022does}, SAM~\citep{sam_paper}, and TSAM \citep{tsam}, respectively.
For zeroth-order methods, we compare ZEST, which minimizes $F_t(x)$, with MeZO, which minimizes $\mathbb{E}[f(x+\epsilon)]$. The detailed implementations are in Appendix~\ref{app:exp:setup}. For TSAM and ZEST, we try $t\in\{1, 5, 20\}$ and selects the best value based on validation data. Note that TSAM with $t=0$ recovers ESAM (first-order), and ZEST with $t=0$ recovers MeZO (zeroth-order). We summarize the objectives, algorithms, and the memory complexities in Table~\ref{tab:methods}. 

We report the performance of both $\text{ZEST}_{\text{N}}$ (Option 1) and $\text{ZEST}_{\text{BC}}$ (Option 2), which use different update rules as in Algorithm~\ref{alg:zest} Line 9. Additionally, we highlight that ZEST has the same computational and memory complexity as vanilla zeroth-order methods since the cost of taking the exponential of a few losses is negligible. Therefore, the empirical memory efficiency and wallclock-time analysis in prior  works apply to ZEST (Appendix E.7, F.5, and F.6 of \citet{mezo}). %\tian{in the memory table, need to describe $|\cdot|$ denotes the dimensions of the vector}

\begin{table}[htbp]
\small\centering
\caption{Objective and memory cost of different methods. We follow the memory analysis in \citet{ZOO}.  $l$ is the layer index, $a_l$ denotes the stored activations for computing the backward gradients for layer $l$, and $|\cdot|$ denotes the dimension of the vector. We present the memory usage under ball perturbation when applicable since it is more costly than sampling from Gaussian.}
\vspace{-0.5em}
\begin{tabular}{llll}
\toprule
 Type & Objective & Method & Memory \\
\midrule
\multirow{4}{*}{\makecell{1st-\\order}} 
  & $f(x)$ & SGD & {$\sum_{l}\max(|a_l|,|x_l|)+|x|$} \\
  & $\mathbb{E}[f(x+\epsilon)]$ & ESAM \citep{wen2022does}  &  $\sum_{l}\max(|a_l|,|x_l|)+2|x|$  \\
  & $\max_{\epsilon} f(x+\epsilon)$ & SAM \citep{sam_paper} & {$\sum_{l}\max(|a_l|,|x_l|)+2|x|$} \\
  & $F_t(x)$ & TSAM \citep{tsam}& {$\sum_{l}\max(|a_l|,|x_l|)+(k+1)|x|$} \\
\midrule
\multirow{2}{*}{\makecell{0th-\\order}}
  & $\mathbb{E}[f(x+\epsilon)]$ & MeZO \citep{mezo} & \multirow{2}{*}{$2|x|$} \\
  %\cline{2-3}
  & \multirow{1}{*}{$F_t(x)$ (\ref{eq:tsam})} & $\text{ZEST}$ (ours) & \\
  %& & $\text{ZEST}_{\text{BC}}$ & \\
\bottomrule
\end{tabular} \label{tab:methods}
\end{table}

\subsection{Masked Language Models} \label{sec:roberta}

We experiment on four types of classification tasks in the GLUE benchmark \citep{glue}, including sentiment classification, paraphrasing, topic classification, and natural language inference. Following prior work \citep{mezo,dpzero,ZOO}, we focus on the setting of many-shot fine-tuning with prompts where we sample 512 samples for each class. Since SAM is robust to label noise \citep{baek2024sam,tsam}, we additionally fine-tune on the noisy version of each dataset where the label noises are created by switching 30\% of the true labels uniformly at random to other labels (details in Appendix~\ref{app:exp}). %All the first-order methods run for a maximum of 200 epochs, and all the zeroth-order ones run for a maximum of 700 epochs, with early stopping enabled.

\begin{table}[htbp]
\centering
\small\caption{Experiments on RoBERTa-Base (512 training examples per class). The objectives of each method are in Table~\ref{tab:methods}.}
\vspace{-0.5em}
\begin{tabular}{llcccccccc}
\toprule
 \multirow{2}{*}{Type} & \multirow{2}{*}{Method} & \textbf{SST-2} & \textbf{SST-5} & \textbf{QQP} & \textbf{MRPC} & \textbf{TREC} & \textbf{MNLI} & \textbf{SNLI} & \textbf{RTE} \\
 & & \multicolumn{2}{c}{sentiment cls.} & \multicolumn{2}{c}{paraphrase} & topic cls. & \multicolumn{3}{c}{natural language inference} \\
\midrule
\multirow{4}{*}{\makecell{1st-\\order}}
  &  SGD & 92.8 & 56.2 & 84.0 & 88.2 & 97.6 & 78.4 & 84.7 & 78.3 \\
  &  ESAM & 93.0 & 56.4& 84.3 & 88.5 & 97.8 & 78.4 & 85.3 & 79.4 \\
  & SAM & 93.2& 56.4& 84.8& \textbf{90.0} &97.8 & 79.3 & 85.4 & 80.1 \\
  & TSAM & \textbf{93.5} & \textbf{57.5}& \textbf{85.0}&89.2 & \textbf{98.0} & \textbf{79.5} & \textbf{85.8} & \textbf{80.5} \\
\midrule
\multirow{3}{*}{\makecell{0th-\\order}}
  & MeZO & 92.1 & 48.6 & 71.4 & 81.9 & 94.8 & 71.8& 78.2 & 72.9 \\
  & \cellcolor{pur!40}$\text{ZEST}_{\text{N}}$ & \cellcolor{pur!40}\textbf{92.2} & \cellcolor{pur!40}49.4 & \cellcolor{pur!40}71.6 & \cellcolor{pur!40}\textbf{83.6} & \cellcolor{pur!40}\textbf{95.6} & \cellcolor{pur!40}73.6 & \cellcolor{pur!40}\textbf{78.3} & \cellcolor{pur!40}\textbf{73.3} \\
  & \cellcolor{pur!40}$\text{ZEST}_{\text{BC}}$ & \cellcolor{pur!40}92.0 & \cellcolor{pur!40}\textbf{49.7} & \cellcolor{pur!40}\textbf{72.6} & \cellcolor{pur!40}81.6 & \cellcolor{pur!40}95.2 & \cellcolor{pur!40}\textbf{73.8} & \cellcolor{pur!40}78.2 & \cellcolor{pur!40}72.9 \\
\bottomrule
\end{tabular} \label{tab:roberta:clean}
\end{table}

\begin{table}[htbp]
\small\centering
\caption{Experiments on RoBERTa-Base (512 training examples per class with 30\% noisy labels). The objectives of each method are in Table~\ref{tab:methods}.}
\vspace{-0.5em}
\begin{tabular}{llcccccccc}
\toprule
 \multirow{2}{*}{Type} & \multirow{2}{*}{Method} & \textbf{SST-2} & \textbf{SST-5} & \textbf{QQP} & \textbf{MRPC} & \textbf{TREC} & \textbf{MNLI} & \textbf{SNLI} & \textbf{RTE} \\
 & & \multicolumn{2}{c}{sentiment cls.} & \multicolumn{2}{c}{paraphrase} & topic cls. & \multicolumn{3}{c}{natural language inference} \\
\midrule
\multirow{4}{*}{\makecell{1st-\\order}}
  &  SGD & 89.2 & 53.7& 73.8& 77.0& 96.2& 73.8& 78.1& 66.1\\
  &  ESAM & 89.9& 54.6& 79.5& 77.5& 96.2& 75.4& 79.2& 66.8\\
  & SAM & 91.1& \textbf{55.2}& 80.2& \textbf{78.9}& 96.2& \textbf{76.9}& 80.8& \textbf{68.6}\\
  & TSAM & \textbf{91.5} & \textbf{55.2}& \textbf{81.0} & 77.7 & \textbf{96.4} & 76.5& \textbf{81.4}& 67.5 \\
\midrule
\multirow{3}{*}{\makecell{0th-\\order}}
  & MeZO & 89.0& 44.7& 62.4& 67.2& 86.2& 60.3& 59.2& 59.9\\
  & \cellcolor{pur!40}$\text{ZEST}_{\text{N}}$ & \cellcolor{pur!40}\textbf{89.4} & \cellcolor{pur!40}\textbf{46.2}& \cellcolor{pur!40}\textbf{68.3}& \cellcolor{pur!40}68.6& \cellcolor{pur!40}\textbf{86.8}& \cellcolor{pur!40}\textbf{63.4}& \cellcolor{pur!40}\textbf{64.9}& \cellcolor{pur!40}61.4\\
  & \cellcolor{pur!40}$\text{ZEST}_{\text{BC}}$ & \cellcolor{pur!40}88.2& \cellcolor{pur!40}44.7& \cellcolor{pur!40}62.7& \cellcolor{pur!40}\textbf{68.9}& \cellcolor{pur!40}\textbf{86.8}& \cellcolor{pur!40}\textbf{63.4}&\cellcolor{pur!40}64.3& \cellcolor{pur!40}\textbf{61.7}\\
\bottomrule
\end{tabular} \label{tab:roberta:noisy}
\end{table}

On clean data, ZEST consistently outperforms MeZO by up to 1.7\% in accuracy (Table~\ref{tab:roberta:clean}), and on data with noisy labels, ZEST consistently outperforms MeZO by up to 5.9\% in accuracy (Table~\ref{tab:roberta:noisy}). On both clean and noisy data, SAM and TSAM consistently outperform ESAM, indicating the superiority of non-uniform regularizer sensitivity in $R_t$ as opposed to $R_{\text{avg}}$. We also observe that $\text{ZEST}_{\text{BC}}$ outperforms $\text{ZEST}_{\text{N}}$ on $3/8$ and $4/8$ tasks on clean and noisy data, respectively.

\subsection{Autoregressive Language Models} \label{sec:opt}

Apart from classification tasks, we experiment on multiple-choice and generation tasks with OPT-1.3B. For each dataset, we randomly sample 1000, 500, and 1000 examples for training, validation, and testing. From Table~\ref{tab:opt}, we observe that (1) TSAM and SAM consistently outperform ESAM, confirming the superiority of $R_t$ to $R_{\text{avg}}$; (2) ZEST consistently outperforms MeZO by up to 4.0\% in accuracy/F1 score and matches or outperforms first-order methods on multi-choice tasks.

\begin{table}[htbp]
\small\centering
\caption{Test accuracy/F1 of OPT-1.3B (1000 training samples). See Table~\ref{tab:methods} for method descriptions.}
\vspace{-0.5em}
\begin{tabular}{llcccc}
\toprule
  \multirow{2}{*}{Type} & \multirow{2}{*}{Method} & \textbf{COPA} & \textbf{ReCoRD} & \textbf{SQuAD} & \textbf{DROP} \\
 &  & \multicolumn{2}{c}{multiple choice} & \multicolumn{2}{c}{generation} \\
\midrule
\multirow{4}{*}{\makecell{1st-\\order}}
  &  SGD & 75.0 & 72.2 & 83.4 & 29.7 \\
  & ESAM & 76.0 & 72.5 & 83.7 & 31.2 \\
  & SAM & \textbf{77.0} & \textbf{72.7} & 84.3& \textbf{31.8} \\
  & TSAM & \textbf{77.0}& 72.1& \textbf{84.6}& 31.3\\
\midrule
\multirow{3}{*}{\makecell{0th-\\order}}
  &  MeZO & 74.0 & 72.4 & 78.8 & 25.2 \\
  & \cellcolor{pur!40}$\text{ZEST}_{\text{N}}$ & \cellcolor{pur!40}\textbf{78.0} & \cellcolor{pur!40}72.3 & \cellcolor{pur!40}\textbf{79.4} & \cellcolor{pur!40}25.5 \\
  & \cellcolor{pur!40}$\text{ZEST}_{\text{BC}}$ & \cellcolor{pur!40}77.0 & \cellcolor{pur!40}\textbf{72.5} & \cellcolor{pur!40}79.0 & \cellcolor{pur!40}\textbf{25.7} \\
\bottomrule
\end{tabular} \label{tab:opt}
\end{table}

We observe that $\text{ZEST}_{\text{BC}}$ and $\text{ZEST}_{\text{N}}$ perform on par in the above experiments. The potential reason is the use of small $k$, which makes the bias reduction from $O(1/k)$ to $O(1/k^2)$ not noticeable. We leave applying more advanced ratio estimates to ZEST to future work.

\subsection{Flatness of ZEST Solutions} \label{sec:flat}

\begin{wrapfigure}{r}{0.5\textwidth}
    \centering
    \vspace{-0.3in}
    \includegraphics[width=0.24\textwidth]{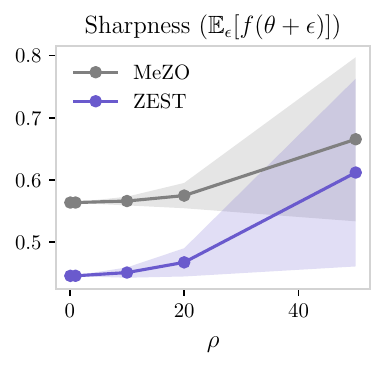}
    \includegraphics[width=0.24\textwidth]{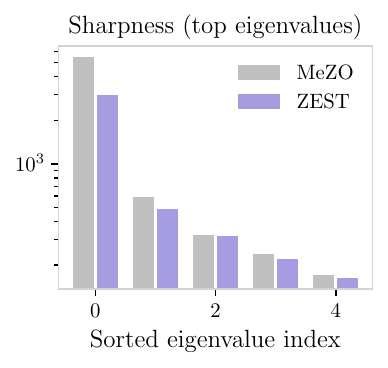}
    % \vspace{-0.05in}
    \caption{Sharpness of the solutions found by MeZO and ZEST on MRPC. Left: Scatters denote the average loss of the neighborhood among 500 perturbations, and the shade denotes the standard deviation. Right: Top-5 eigenvalues of Hessian.}
    \label{fig:flat_solution}
    \vspace{-0.3in}
\end{wrapfigure}

In this section, we evaluate the flatness of ZEST solutions in comparison to MeZO solutions. We compare their sharpness measurements under various definitions, including the average loss in the neighborhood of $x$ \citep{wen2023sharpness} and top-5 eigenvalues of the Hessian \citep{wen2022does}. In Figure~\ref{fig:flat_solution} (Left), we observe that under various neighborhood radii, the minimum found by ZEST has smaller average losses than that found by MeZO. In addition, the top-5 eigenvalues are all smaller than those of MeZO (Right). The same observation on more datasets is presented in Appendix~\ref{app:exp:result}.

% \begin{figure*}[h!]
%     \centering
%     \includegraphics[width=0.33\textwidth]{figs/mrpc_512_EL.pdf}
%     \includegraphics[width=0.33\textwidth]{figs/mrpc_512_eigen.pdf}
%     \caption{Sharpness of the solutions found by MeZO and ZEST on MRPC. Left: The scatters denote the average loss of the neighborhood $\mathbb{E}_{\norm{\epsilon}\leq\rho}[f(x+\epsilon)]$ among 500 perturbations and the shade denotes the standard deviation. Right: Top-5 eigenvalues of Hessian.}
%     \label{fig:flat_solution}
%     % \vspace{-2em}
% \end{figure*}

\subsection{Sensitivity to $t$} \label{sec:sensitivity}

Though the generalization bounds for exponential tilting are presented in prior literature \citep{tsam,aminian2025generalization}, the optimal choice of $t$ is data-dependent. In practice, one needs to find the $t$ value that yields the best validation performance. In this section, we present the validation performance of RoBERTa-Base under $t=\{0, 1, 5, 20\}$ in Figure~\ref{fig:t_sensitivity}. The results show that multiple $t$ values yield superior performance to MeZO ($t=0$). Additionally, $t=1$ is a safe go-to choice for preliminary trials since it consistently yields superior or comparable performance to MeZO: $t=1$ matches or outperforms MeZO on $7/8$ settings; the only case when $t=1$ underperforms is by 0.1\%.

\begin{figure*}[h!]
    \centering
    \includegraphics[width=\textwidth]{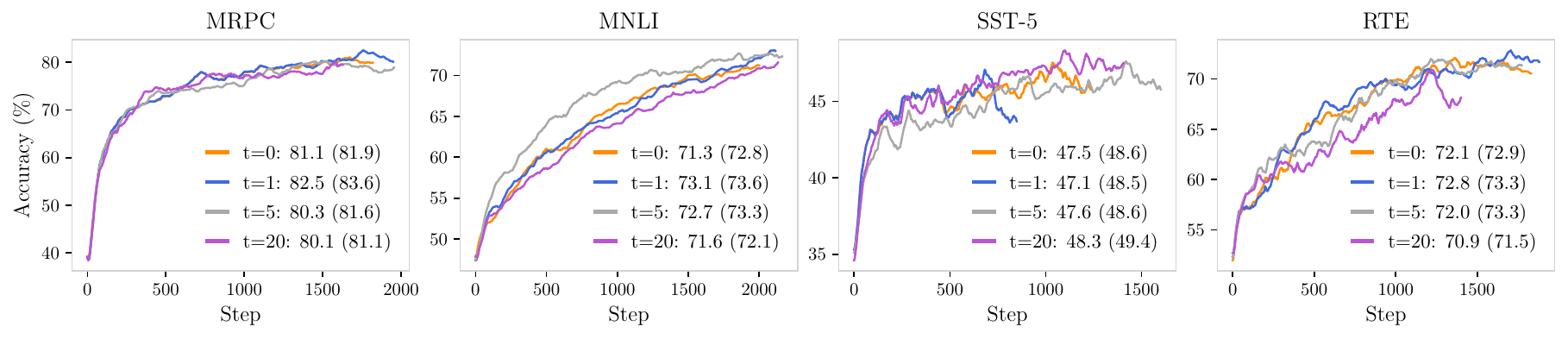}
    \includegraphics[width=\textwidth]{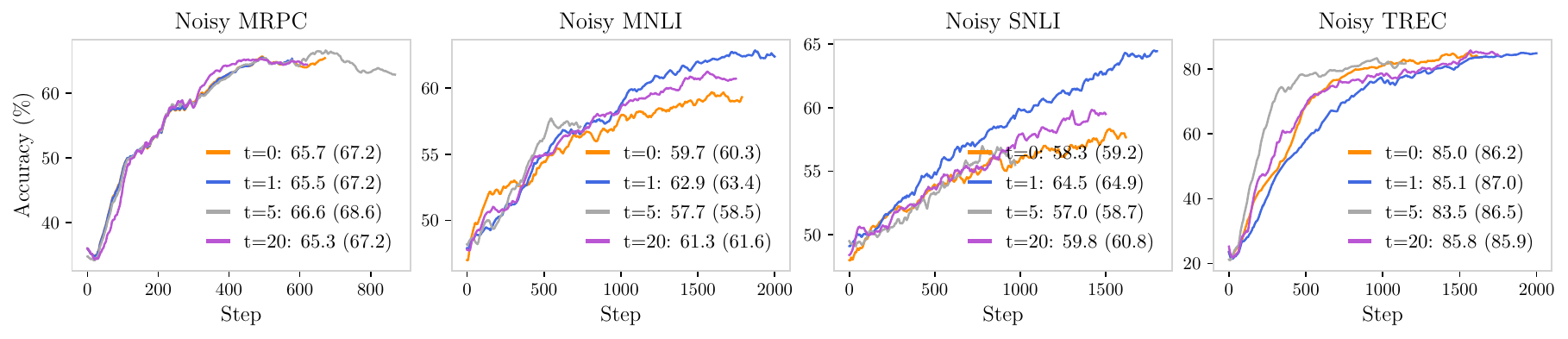}
    \caption{Validation accuracies of MeZO ($t=0$) and ZEST ($t=\{1,5,20\}$) on different datasets with clean labels (Upper) and 30\% noisy labels (Bottom). The x-axis denotes evaluation steps. On each dataset, we have $k=5$ sampled perturbations. The curves are smoothed for visualization, so we report the final smoothed accuracy and the final raw accuracy in the brackets. The results show that $t=1$ almost always outperforms MeZO ($t=0$): In the above plots, $t=1$ outperforms $t=0$ in raw accuracies by 1.7\%, 0.8\%, 0.4\%, 2.9\%, 5.7\%, 0.8\% and underperform by only 0.1\% on SST-5.}
    \label{fig:t_sensitivity}
\end{figure*}
\section{Conclusion}
We have introduced ZEST, a gradient-free optimization framework that unifies classic zeroth-order optimization with sharpness-aware minimization. By leveraging exponential tilting, ZEST optimizes for a continuous spectrum of objectives that smoothly interpolate between the standard average-loss zeroth-order objective and the worst-case min-max SAM formulation. Theoretically, we have characterized the sharpness bias induced by the tilted objective and demonstrate that ZEST can avoid minima of high curvatures that vanilla zeroth-order methods overlook. Empirically, ZEST preserves efficiency while consistently outperforming vanilla zeroth-order methods and, in many cases, first-order SAM variants on various downstream tasks. These demonstrate that ZEST provides a powerful bridge between zeroth-order optimization and sharpness-aware training, enabling gradient-free yet curvature-sensitive learning that generalizes better while remaining efficient.

%\clearpage
\bibliographystyle{iclr2026}
\bibliography{cite}

\clearpage
\appendix
% \section{Related Work}
\section{Vanilla Zeroth-Order Gradient Estimate} \label{app:related}

In this section, we provide an additional introduction to zeroth-order optimization and the vanilla gradient estimate. 

In zeroth-order optimization, we estimate $\nabla f(x)$ using only function evaluations. A standard estimator is the two-point symmetric finite difference
\begin{align}
    G(x,\rho,u) := \frac{f(x + \rho u) - f(x - \rho u)}{2\rho} u,\label{eq:gZO}
\end{align}
where $u$ is a random direction sampled uniformly from the sphere $\sqrt{d}\mathbb{S}^{d-1}$ or Gaussian $\mathcal{N}(0,I_d)$, and $\rho > 0$ is a smoothing parameter. In the following, we abbreviate $\mathcal{B}\coloneqq\mathcal{U}(\sqrt{d}\mathbb{B}^{d})$, $\mathcal{S}\coloneqq\mathcal{U}(\sqrt{d}\mathbb{S}^{d-1})$, and $\mathcal{N}\coloneqq\mathcal{N}(0,I_d)$. We use $\mathbb{E}_{\mathcal{B}}$, $\mathbb{E}_{v\sim\mathcal{B}}$, and $\mathbb{E}_{v\sim\mathcal{U}(\sqrt{d}\mathbb{B}^d)}$ interchangeably when the meaning is clear from the context.

For sampling from the sphere, when $\rho \to 0$, the estimator is asymptotically unbiased since
$$\mathbb{E}_{u\sim\mathcal{S}} \left[ \frac{f(x + \rho u) - f(x - \rho u)}{2\rho}  u \right] \to \mathbb{E}_{u\sim\mathcal{S}} [ u u^\top ] \nabla f(x) = \nabla f(x).$$
When $\rho$ is general, the estimator corresponds to the gradient of a smoothed objective \citep{duchi2015optimal,dpzero}. Define
$$f_\rho(x) := \mathbb{E}_{v \sim \mathcal{B}}[f(x + \rho v)]$$
and by the divergence theorem in $\mathbb{R}^d$, 
$$\nabla_x f_\rho(x) = \mathbb{E}_{u \sim \mathcal{S}} \left[ G(x,\rho,u) \right].$$
Thus, the estimator in expectation is the gradient of a smoothed version of $f$ where the smoother is a uniform distribution on a ball. Similarly, for sampling from Gaussian, we have
\begin{align*}
    \nabla_x \mathbb{E}_{v\sim\mathcal{N}}[f(x+\rho v)] = \mathbb{E}_{v\sim\mathcal{N}}[G(x,\rho,v)].
\end{align*}
We can interpret the above results that updating using the vanilla zeroth-order gradient estimate optimizes for a smoothed objective of $f(x)$. By Taylor expansion, for $\pi\in\{\mathcal{S}, \mathcal{N}\}$, we have
\begin{align*}
    \mathbb{E}_{v\sim \pi}[f(x+\rho v)] & = f(x) + \mathbb{E}_{v\sim \pi}[\nabla f(x)^{\top} v] + \frac{\rho^2}{2}\mathbb{E}_{v\sim \pi}[v^{\top} \nabla^2 f(x) v] + \mathbb{E}_{v\sim \pi}[O(\rho^2\norm{v}^2)] \\
    & = f(x) + \frac{\rho^2}{2}\text{Tr}(\nabla^2 f(x)) + O(\rho^2d),
\end{align*}
which implies that the effective objective of vanilla zeroth-order optimization is the empirical loss $f(x)$ added by a regularizer $R_{\text{avg}} \propto \text{Tr}(\nabla^2 f(x))$.
\section{Proofs} \label{app:proof}

\subsection{Proof of Theorem~\ref{thm:ZEST} (Gaussian)}

\begin{proof}

By Stein's lemma \citep{stein}, for the $d$-dimensional random vector $v\sim \mathcal{N}(0, I_d)$ and a differentiable function $g$ for which $\mathbb{E}[g(v)v]$ and $\mathbb{E}[\nabla_v g(v)]$ both exist, we have 
\begin{align}
    \mathbb{E}_v[g(v)v] = \mathbb{E}_v[\nabla_v g(v)].\label{eq:Stein}
\end{align}
Therefore, we let $g(v) = e^{tf(x+\rho v)}$ and obtain
$$ \int_{v} \nabla_v (e^{tf(x+\rho v)}) p(v) dv = \int_{v} e^{tf(x+\rho v)} p(v) v dv $$
and thus
\begin{align}
    \int_{v} e^{tf(x+\rho v)-\norm{v}^2/2} \nabla_v f(x+\rho v) dv = \frac{1}{t} \int_{v}e^{tf(x+\rho v)-\norm{v}^2/2} v dv. \label{eq:Stein_app1}
\end{align}

Note that the gradient of $t$-SAM is 
\begin{align*}
    \nabla_x F_t(x) = \frac{\mathbb{E}_{v\sim\mathcal{N}}[e^{tf(x+\rho v)}\nabla f(x+\rho v)]}{\mathbb{E}_{v\sim\mathcal{N}}[e^{tf(x+\rho v)}]} = \frac{\int_{v} e^{tf(x+\rho v) - \norm{v}^2/2} \nabla f(x+\rho v) dv}{\int_{v} e^{tf(x+\rho v) - \norm{v}^2/2} dv}.
\end{align*}
Combining the above, we have
\begin{align*}
    \nabla_x F_t(x) & \overset{(a)}{=} \frac{\int_{v} e^{tf(x+\rho v) - \norm{v}^2/2} \nabla_v f(x+\rho v) dv}{\rho \int_{v} e^{tf(x+\rho v) - \norm{v}^2/2} dv} \\
    & \overset{(\ref{eq:Stein_app1})}{=} \frac{\int_{v}e^{tf(x+\rho v)-\norm{v}^2/2} v dv}{t\rho \int_{v} e^{tf(x+\rho v) - \norm{v}^2/2} dv} \\
    & = \frac{\mathbb{E}_{v\sim\mathcal{N}}[e^{tf(x+\rho v)} v]}{t\rho \mathbb{E}_{v\sim\mathcal{N}}[e^{tf(x+\rho v)}]} \numberthis\label{eq:T1} \\ 
    & = \frac{\frac{1}{2}(\mathbb{E}_{v\sim\mathcal{N}}[e^{tf(x+\rho v)} v] + \mathbb{E}_{v\sim\mathcal{N}}[e^{tf(x+\rho v)} v])}{t\rho \cdot \frac{1}{2}(\mathbb{E}_{v\sim\mathcal{N}}[e^{tf(x+\rho v)}] + \mathbb{E}_{v\sim\mathcal{N}}[e^{tf(x+\rho v)}])} \\
    & = \frac{1}{t\rho}\frac{\mathbb{E}_{v\sim\mathcal{N}}[e^{tf(x+\rho v)} v] + \mathbb{E}_{v\sim\mathcal{N}}[e^{tf(x+\rho (-v))} (-v)]}{\mathbb{E}_{v\sim\mathcal{N}(0, I_d)}[e^{tf(x+\rho v)}] + \mathbb{E}_{v\sim\mathcal{N}}[e^{tf(x+\rho (-v))}]} \\
    & = \frac{1}{t\rho}\frac{\mathbb{E}_{v\sim\mathcal{N}}[(e^{tf(x+\rho v)} - e^{tf(x-\rho v)}) v]}{\mathbb{E}_{v\sim\mathcal{N}}[e^{tf(x+\rho v)} + e^{tf(x-\rho v)}]} \numberthis
\end{align*}

% & = \frac{1}{t} \left(\frac{\mathbb{E}_{v\sim\mathcal{N}(0, I_d)}[e^{tf(x+\rho v)} v]}{2\rho \mathbb{E}_{v\sim\mathcal{N}(0, I_d)}[e^{tf(x+\rho v)}]} - \frac{\mathbb{E}_{v\sim\mathcal{N}(0, I_d)}[e^{f(x-\rho v)} v]}{2\rho \mathbb{E}_{v\sim\mathcal{N}(0, I_d)}[e^{tf(x-\rho v)}]}\right)

where $(a)$ is due to $\nabla_x \phi(x + \rho v) = \nabla \phi(x + \rho v) = \frac{1}{\rho} \nabla_v \phi(x + \rho v)$ where $\nabla \phi(\cdot)$ denotes the gradient w.r.t. the input of function $\phi$. 

\end{proof}

\paragraph{Case of $\boldsymbol{t\to 0}$.} As $t\to 0$, we apply L'Hôpital's rule to obtain
\begin{align*}
    \lim_{t\rightarrow 0} \nabla_x F_t(x) & = \frac{\lim_{t\rightarrow 0} \mathbb{E}_{\mathcal{N}}[e^{tf(x+\rho v)}f(x+\rho v) v]}{\lim_{t\rightarrow 0} \rho \mathbb{E}_{\mathcal{N}}[e^{tf(x+\rho v)}] + t\rho \mathbb{E}_{\mathcal{N}}[e^{tf(x+\rho v)}f(x+\rho v)]} \\
    & = \mathbb{E}_{\mathcal{N}}\left[\frac{f(x+\rho v) v}{\rho}\right] \\
    & = \mathbb{E}_{\mathcal{N}}\left[\frac{f(x+\rho v) v}{2\rho}\right] + \mathbb{E}_{\mathcal{N}}\left[\frac{f(x+\rho (-v)) (-v)}{2\rho}\right] \\
    & = \mathbb{E}_{\mathcal{N}}\left[\frac{f(x+\rho v) - f(x-\rho v) }{2\rho} v \right],
\end{align*}
which is precisely the vanilla zeroth-order gradient estimator with Gaussian perturbation.

\subsection{Proof of Theorem~\ref{thm:ZEST} (Ball)}

\begin{proof}
Recall that under the uniform ball perturbation, the $t$-SAM gradient is
\begin{align}
    \nabla_x F_t(x) = \frac{\mathbb{E}_{v\sim\mathcal{U}(\sqrt{d}\mathbb{B}^d)}[e^{tf(x+\rho v)}\nabla f(x+\rho v)]}{\mathbb{E}_{v\sim\mathcal{U}(\sqrt{d}\mathbb{B}^d)}[e^{tf(x+\rho v)}]}.\label{eq:F_grad}
\end{align}
Denote $Z=\int_{\sqrt{d}\mathbb{B}^d}e^{tf(x+\rho v)}dv$. Then by definition, we have
    \begin{align}
        \mathbb{E}_{v\sim\mathcal{U}(\sqrt{d}\mathbb{B}^d)}[e^{tf(x+\rho v)}] = \frac{\int_{\sqrt{d}\mathbb{B}^d} e^{tf(x+\rho v)} dv}{\text{Vol}(\sqrt{d}\mathbb{B}^d)} = \frac{Z}{\text{Vol}(\sqrt{d}\mathbb{B}^d)} \label{eq:E[B]}
    \end{align}
    \begin{align}
        \mathbb{E}_{v\sim\mathcal{U}(\sqrt{d}\mathbb{B}^d)}[e^{tf(x+\rho v)}\nabla f(x+\rho v)] = \frac{\int_{\sqrt{d}\mathbb{B}^d}e^{tf(x+\rho v)}\nabla f(x+\rho v) dv}{\text{Vol}(\sqrt{d}\mathbb{B}^d)}, \label{eq:E[S]}
    \end{align}
and applying them to Eq.~(\ref{eq:F_grad}) gives us
$$ \nabla_x F_t(x) = \frac{\int_{\sqrt{d}\mathbb{B}^d}e^{tf(x+\rho v)}\nabla f(x+\rho v) dv}{Z}.$$
By change of variable, we have 
$$ \nabla_x \int_{\sqrt{d}\mathbb{B}^d} e^{tf(x+\rho v)} dv = \int_{\sqrt{d}\mathbb{B}^d} \nabla_x (e^{tf(x+\rho v)}) dv = \frac{1}{\rho} \int_{\sqrt{d}\mathbb{B}^d} \nabla_v (e^{tf(x+\rho v)}) dv. $$
According to the divergence theorem in higher dimensions, for a scalar field $\phi \in C^1: \mathbb{R}^d \to \mathbb{R}$ and a compact volume $\Omega \subset \mathbb{R}^d$ with piecewise smooth boundary $\partial\Omega$, we have
\begin{align}
   \int_{\Omega} \nabla \phi dV = \int_{\partial\Omega} \phi\textbf{n} dS \label{eq:divergence}
\end{align}
where $\textbf{n}$ is the outward unit normal to the point on $\partial\Omega$, given that both sides of the equation are integrable over their domains. Therefore, by letting $\phi(v) = e^{tf(x+\rho v)}$, $\Omega=\sqrt{d}\mathbb{B}^d$, and $\partial\Omega=\sqrt{d}\mathbb{S}^{d-1}$, we obtain
$$ \int_{\sqrt{d}\mathbb{B}^d} \nabla_v (e^{tf(x+\rho v)}) dv = \int_{\sqrt{d}\mathbb{S}^{d-1}} e^{tf(x+\rho u)} \frac{u}{\norm{u}} du = \frac{1}{\sqrt{d}} \int_{\sqrt{d}\mathbb{S}^{d-1}} e^{tf(x+\rho u)} u du. $$
Expanding the LHS gives us 
\begin{align*}
    \int_{\sqrt{d}\mathbb{B}^d} e^{tf(x+\rho v)} \nabla_v f(x+\rho v) dv = \frac{1}{t\sqrt{d}} \int_{\sqrt{d}\mathbb{S}^{d-1}} e^{tf(x+\rho u)} u du. \numberthis \label{eq:BS}
\end{align*}
Combining the above, we obtain
\begin{align*}
    \nabla_x F_t(x) & = \frac{1}{\rho Z} \int_{\sqrt{d}\mathbb{B}^d} e^{tf(x+\rho v)} \nabla_vf(x+\rho v) dv \\
    & \overset{(\ref{eq:BS})}{=} \frac{1}{t \rho \sqrt{d}Z}  \int_{\sqrt{d}\mathbb{S}^{d-1}} e^{tf(x+\rho u)} u du \\
    & \overset{(a)}{=} \frac{\text{Area}(\sqrt{d}\mathbb{S}^{d-1})}{t \rho\sqrt{d} Z} \mathbb{E}_{u\sim \mathcal{U}(\sqrt{d}\mathbb{S}^{d-1})ß}[e^{tf(x+\rho u)} u] \\
    & \overset{(b)}{=} \frac{\sqrt{d} \cdot \text{Vol} (\sqrt{d}\mathbb{B}^d)}{t \rho\sqrt{d} Z} \mathbb{E}_{u\sim \mathcal{U}(\sqrt{d}\mathbb{S}^{d-1})}[e^{tf(x+\rho u)} u] \\
    & \overset{(\ref{eq:E[B]})}{=} \frac{1}{t \rho} \frac{\mathbb{E}_{u\sim \mathcal{U}(\sqrt{d}\mathbb{S}^{d-1})}[e^{tf(x+\rho u)} u]}{\mathbb{E}_{v\sim\mathcal{U}(\sqrt{d}\mathbb{B}^d)}[e^{tf(x+\rho v)}]} \numberthis \label{eq:ZOg}
\end{align*}
where $(a)$ follows the definition of $\mathbb{E}_{u\sim \mathcal{U}(\sqrt{d}\mathbb{S}^{d-1})}[e^{tf(x+\rho u)} u]$ and $(b)$ is due to $\text{Area}(r \mathbb{S}^{d-1}) = \frac{d}{r} \cdot \text{Vol} (r \mathbb{B}^d)$, which gives us $\text{Area}(\sqrt{d}\mathbb{S}^{d-1}) = \sqrt{d} \cdot \text{Vol} (\sqrt{d}\mathbb{B}^d)$.

\paragraph{Case of $\boldsymbol{t\to 0}$.} As $t\to 0$, we apply L'Hôpital's rule to obtain
\begin{align*}
    \lim_{t\to 0} \nabla_x F_t(x) & = \frac{\lim_{t\to 0} \mathbb{E}_{\mathcal{S}}[e^{tf(x+\rho u)}f(x+\rho u) u]}{\lim_{t\to 0} \rho \mathbb{E}_{\mathcal{B}}[e^{tf(x+\rho v)}] + t\rho \mathbb{E}_{\mathcal{B}}[e^{tf(x+\rho v)}f(x+\rho v)]} \\
    & = \mathbb{E}_{\mathcal{S}}\left[\frac{f(x+\rho u) u}{\rho}\right] \\
    & = \mathbb{E}_{\mathcal{S}}\left[\frac{f(x+\rho u) u}{2\rho}\right] + \mathbb{E}_{\mathcal{S}}\left[\frac{f(x+\rho (-u)) (-u)}{2\rho}\right] \\
    & = \mathbb{E}_{\mathcal{S}}\left[\frac{f(x+\rho u) - f(x-\rho u) }{2\rho} u \right],
\end{align*}
which recovers the vanilla zeroth-order gradient estimator in Eq.~(\ref{eq:gZO}).

\end{proof}

\subsection{Reusing Sphere Perturbations}

In Theorem~\ref{thm:ZEST}, we use Eq.~(\ref{eq:ss}) to approximate Eq.~(\ref{eq:sb}). The rationale of this choice is the fact that most of the volume of a high-dimensional ball is concentrated near its boundary. As specified in Lemma~\ref{lemma:concentration}, $\mathbb{E}[\norm{v}] \approx \sqrt{d}$ and $\text{Var}(\norm{v}) \approx \frac{1}{3d}$ for $d\gg 1$, which agrees with what we encounter in practice. This motivates us to use the same sampled perturbations and the computed losses to compute both the numerator and the denominator, which thus gives ZEST the same computational workload as the vanilla zeroth-order optimization method. 

% We quantify the bias of $\hat{g}_{t}(x)$ due to our sampling from the sphere to compute the denominator in Theorem~\ref{thm:bias}. 

\begin{lemma}[Measure of Concentration] \label{lemma:concentration}
    For a random point uniformly sampled from a ball with radius $\sqrt{d}$, its norm $\norm{v}$ satisfies
    \begin{align}
        \mathbb{E}[\norm{v}] = \sqrt{d}\left(1-\frac{1}{d+1}\right)
    \end{align}
    \begin{align}
        \text{Var}(\norm{v}) = \frac{d^2}{d+2} - \frac{d^3}{(d+1)^2} \overset{d\gg 1}{\approx} \frac{1}{3d}
    \end{align}
\end{lemma}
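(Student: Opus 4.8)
The plan is to collapse the $d$-dimensional uniform distribution to a one-dimensional radial integral. First I would identify the distribution of the radius $r\coloneqq\norm{v}$. For $v$ uniform on the ball $\sqrt{d}\sB^d$ of radius $R=\sqrt{d}$, the ball of radius $r\le R$ is a scaled copy of the full ball, so $\Pr(\norm{v}\le r)=(r/R)^d$ by the volume ratio. Differentiating this CDF gives the radial density $p(r)=d\,r^{d-1}/R^{d}$ on $[0,R]$, which integrates to one as a consistency check. This reduces everything to elementary one-variable integrals.

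The key computational step is a single moment formula. Integrating powers of $r$ against $p$ gives, for any $k\ge 0$,
\begin{align*}
    \E[\norm{v}^k] = \int_0^R r^k\,\frac{d\,r^{d-1}}{R^d}\,dr = \frac{d}{R^d}\cdot\frac{R^{d+k}}{d+k} = \frac{d}{d+k}\,R^k .
\end{align*}
Specializing to $R=\sqrt d$, the choice $k=1$ yields $\E[\norm{v}]=\frac{d}{d+1}\sqrt d=\sqrt d\,(1-\tfrac{1}{d+1})$, which is the first claim, and $k=2$ yields $\E[\norm{v}^2]=\frac{d}{d+2}\cdot d=\frac{d^2}{d+2}$. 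The variance then follows immediately by subtraction,
\begin{align*}
    \Var(\norm{v}) = \E[\norm{v}^2]-\big(\E[\norm{v}]\big)^2 = \frac{d^2}{d+2}-\frac{d^3}{(d+1)^2},
\end{align*}
matching the exact expression in the statement.

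Nothing here is a genuine obstacle once the radial density is in hand; the argument is entirely elementary. The only point demanding care is the large-$d$ estimate, because $\frac{d^2}{d+2}$ and $\frac{d^3}{(d+1)^2}$ are \emph{both} of order $d$, so the variance is a difference of two large quantities and a naive term-by-term expansion loses the leading behavior. I would therefore extract the polynomial parts by exact division first, writing $\frac{d^2}{d+2}=d-2+\frac{4}{d+2}$ and $\frac{d^3}{(d+1)^2}=d-2+\frac{3d+2}{(d+1)^2}$ so that the $d-2$ terms cancel and $\Var(\norm{v})=\frac{4}{d+2}-\frac{3d+2}{(d+1)^2}$, and only then perform a Laurent expansion in $1/d$ to pin down the correct constant in the $O(1/d)$ decay reported in the statement.
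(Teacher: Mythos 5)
Your derivation follows the paper's proof essentially verbatim: both reduce to the radial density $q(r)=d\,r^{d-1}/d^{d/2}$ on $[0,\sqrt d]$ obtained from the volume ratio, compute the first two moments by elementary integration, and subtract; your exact expressions for $\E[\norm{v}]$, $\E[\norm{v}^2]$, and $\Var(\norm{v})$ all agree with the paper's. The one step you defer, however, is exactly where the statement goes wrong, so you should carry it out rather than trust the advertised constant. Combining over a common denominator gives the exact value
$\Var(\norm{v})=\frac{d^2(d+1)^2-d^3(d+2)}{(d+2)(d+1)^2}=\frac{d^2}{(d+2)(d+1)^2}$,
which is $\frac{1}{d+4}+O(d^{-3})$, i.e.\ $\Var(\norm{v})\approx \frac{1}{d}$ and \emph{not} $\frac{1}{3d}$. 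Your own decomposition confirms this if you expand it: $\frac{4}{d+2}-\frac{3d+2}{(d+1)^2}=\left(\frac{4}{d}-\frac{8}{d^2}\right)-\left(\frac{3}{d}-\frac{4}{d^2}\right)+O(d^{-3})=\frac{1}{d}-\frac{4}{d^2}+O(d^{-3})$. The $\frac{1}{3d}$ in the lemma (and the $\frac{1}{3d+4}$ at the end of the paper's own proof) is an arithmetic slip. It is harmless for the purpose the lemma serves --- the concentration argument only needs $\Var(\norm{v})=\Theta(1/d)$ against a radius of $\sqrt d$ --- but a proof that stops at ``then Laurent-expand to recover the stated constant'' would be endorsing a false asymptotic; the expansion, once performed, corrects the statement rather than confirming it.
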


\begin{proof}
    Denote $q_{\norm{v}}(r)$ as the probability density of the event $\norm{v}=r$, which is proportional to the surface area of the sphere $r\mathbb{S}^{d-1}$:
    $$q_{\norm{v}}(r) = \frac{dr^{d-1}}{d^{d/2}}, \; 0\leq r\leq \sqrt{d}.$$
    Then the first and second moment of $\norm{v}$ are 
    $$ \mathbb{E}[{\norm{v}}] = \int_{0}^{\sqrt{d}} r \cdot q_{\norm{v}}(r) dr = \frac{d}{d^{d/2}} \int_{0}^{\sqrt{d}} r^d dr = \sqrt{d} \frac{d}{d+1}$$
    $$ \mathbb{E}[{\norm{v}}^2] = \int_{0}^{\sqrt{d}} r^2 \cdot q_{\norm{v}}(r) dr = \frac{d}{d^{d/2}} \int_{0}^{\sqrt{d}} r^{d+1} dr = \frac{d^2}{d+2} $$
    and thus
    $$ \text{Var}(\norm{v}) = \frac{d^2}{d+2} - \frac{d^3}{(d+1)^2} = \frac{d^2}{(d+2)(d+1)^2} \overset{t\to\infty}{\longrightarrow} \lim_{d\to\infty} \frac{1}{3d+4}.$$
\end{proof}

\subsection{Bias-Corrected Ratio Estimate} \label{app:proof:BC}

In this section, we derive the bias-corrected estimate with bias $O(1/k^2)$. Recall that in each iteration, we sample $k$ perturbations and compute $a_i^+=e^{tf(x+\rho v_i)}$, $a_i^-=e^{tf(x-\rho v_i)}$, and $Z=\sum_{i=1}^k a_i^+ + a_i^-$. We aim to approximate
$$\frac{\mathbb{E}[A]}{\mathbb{E}[B]}, \; \text{ with samples } \; A_i = (a_i^+ - a_i^-) v_i \;\; \text{ and } \; B_i = a_i^+ + a_i^-, \; i\in[k].$$
In the following, we show that making up the bias in the naive plug-in leads to the following estimate:
\begin{align*}
    t\rho  G_{\text{BC}}^k = \sum_{i=1}^k \left\{1+\frac{k}{k-1}[\bar{a}_i^+ + \bar{a}_i^- - \sum_{i=1}^k (\bar{a}_i^+ + \bar{a}_i^-)^2] \right\} (\bar{a}_i^+ - \bar{a}_i^-) v_i.
\end{align*}
\begin{proof}
    Define the function $g(x,y)=\frac{x}{y}$ with $x\in\mathbb{R}^d$ and $y\in\mathbb{R}$. Let $\bar{A}=\frac{1}{k}\sum_{i=1}^k A_i$, $\bar{B}=\frac{1}{k}\sum_{i=1}^k B_i$, $\mu_A = \mathbb{E}[A]$, and $\mu_B = \mathbb{E}[B]$. We expand $g(\bar{A},\bar{B})$ around the point 
$(\mu_A, \mu_B)$ and have 
\begin{align*}
    g(\bar{A}, \bar{B}) & \approx g(\mu_A, \mu_B) + g_{\bar{A}}^{\top} (\bar{A} - \mu_A) + g_{\bar{B}} (\bar{B} - \mu_B) \\
& \quad + \frac{1}{2}[(\bar{A} -\mu_A)^{\top} g_{\mu_{A}\mu_{A}}(\bar{A} -\mu_A) + 2 (\bar{A} -\mu_A)^{\top} g_{\mu_{A}\mu_{B}} (\bar{B} - \mu_B) + g_{\mu_{B}\mu_{B}}(\bar{B}-\mu_B)^2]
\end{align*}
where $g_{\bar{A}} = \frac{\partial g(\bar{A},\bar{B})}{\partial \bar{A}}$, $g_{\bar{B}} = \frac{\partial g(\bar{A},\bar{B})}{\partial \bar{B}}$, $g_{\mu_{A}\mu_{A}} = \frac{\partial^2 g(\mu_A,\mu_B)}{\partial \mu_{A}^2}=0$, $g_{\mu_{B}\mu_{B}} = \frac{\partial^2 g(\mu_A,\mu_B)}{\partial \mu_{B}^2} = \frac{2\mu_{A}}{\mu_{B}^3}$, and $g_{\mu_{A}\mu_{B}} = \frac{\partial^2 g(\mu_A,\mu_B)}{\partial \mu_{A} \partial\mu_{B}} = -\frac{1}{\mu_{B}^2}$. Applying them to the approximate equality and taking the expectation on both sides, we have
\begin{align*}
    \mathbb{E}\left[\frac{\bar{A}}{\bar{B}}\right] & \approx \frac{\mu_A}{\mu_B} - \frac{1}{\mu_{B}^2} \mathbb{E}[(\bar{A} -\mu_A)^{\top} (\bar{B} - \mu_B)] + \frac{\mu_{A}}{\mu_{B}^3}\mathbb{E}[(\bar{B}-\mu_B)^2] \\
    & = \frac{\mu_A}{\mu_B} - \frac{1}{\mu_{B}^2} \text{Cov}(\bar{A},\bar{B}) + \frac{\mu_{A}}{\mu_{B}^3}\text{Var}(\bar{B}) \\
    & = \frac{\mathbb{E}[A]}{\mathbb{E}[B]} - \frac{1}{\mu_{B}^2k} \text{Cov}(A,B) + \frac{\mu_{A}}{\mu_{B}^3k}\text{Var}(B).
\end{align*}
Therefore, we use
$$ t\rho  G_{\text{BC}}^k = \frac{\bar{A}}{\bar{B}} + \frac{1}{k(k-1)} \left[\frac{\sum_{i=1}^k(A_i-\bar{A})(B_i-\bar{B})}{\bar{B}^2} - \frac{\bar{A}\sum_{i=1}^k(B_i - \bar{B})^2}{\bar{B}^3}\right].$$
Denote $a_i^+=e^{tf(x+\rho v_i)}$, $a_i^-=e^{tf(x-\rho v_i)}$ and thus $A_i=(a_i^+-a_i^-)v_i$ and $B_i=a_i^++a_i^-$. In practice, we record $Z=\sum_{i=1}^k a_i^+ + a_i^- = k\bar{B}$ and work with the normalized values $\bar{a}_i^+ \coloneqq a_i^+/Z$ and $\bar{a}_i^- \coloneqq a_i^-/Z$ for numerical stability. So we re-express $t\rho  G_{\text{BC}}^k$ with $A_i'\coloneqq (\bar{a}_i^+ - \bar{a}_i^-) v_i$, $B_i'\coloneqq \bar{a}_i^+ + \bar{a}_i^-$, and thus $\bar{A}' \coloneqq \frac{1}{k} \sum_{i=1}^k A_i'$ as
\begin{align*}
    t\rho  G_{\text{BC}}^k & = \sum_{i=1}^k A_i' + \sum_{i=1}^k\frac{(kB_i'-1)}{k-1}(A_i'-\bar{A}') - \bar{A}' \sum_{i=1}^k \frac{(kB_i' - 1)^2}{k-1} \\
    & = \sum_{i=1}^k \left\{1+\frac{k}{k-1}[B_i'- \sum_{i=1}^k (B_i')^2] \right\}A_i'  \\
\end{align*}
\end{proof}

%\section{Memory-efficient implementation}

% \setlength{\algomargin}{1.5em}
% \begin{algorithm}
% \DontPrintSemicolon
% \caption{ZEST}
% \Input{ Number of steps $T$, parameters $x \in \mathbb{R}^d$, learning rates $\{\eta_t\}$, perturbation scale $\rho$, number of  queries $k$, tilting parameter $t$, batch-size $b$}
% \vspace{0.5em}
% \For{$t = 1, \cdots, T$}{
%     Sample a batch of training data $B$ \;
%     \texttt{seeds}, $\mathcal{L}_+$, $\mathcal{L}_-\gets$ [] \;
%     \vspace{0.5em}
%     \For{$i = 1, \cdots, k$}{
%         \texttt{seeds}[$i$]$\leftarrow$ randomly sampled seed $s$ \;
%         $x\gets \texttt{Perturb}(x,\rho,s)$ \;
%         $\mathcal{L}_+$[$i$] $\gets f(x;B)$ \;
%         $x\gets \texttt{Perturb}(x,-2\rho,s)$ \;
%         $\mathcal{L}_-$[$i$] $ \gets f(x;B)$ \;
%         $x\gets \texttt{Perturb}(x,\rho,s)$
%     }
%     \vspace{0.5em}
%     $\mathcal{L} \gets \text{Softmax}(t * [\mathcal{L_+}, \mathcal{L}_-])$ \Comment*[r]{Concatenate $\mathcal{L}_+$ and $\mathcal{L}_-$} 
%     $\mathcal{L}_+ \gets \mathcal{L}$[$:k$], $\mathcal{L}_- \gets \mathcal{L}$[$k:$] \;
%     \vspace{0.5em}
%     \For{$i = 1, \cdots, k$}{
%         Set the random seed to \texttt{seeds}[$i$] \;
%         $v\sim\mathcal{N}(0,I_d)$ \;
%         $x \gets x - \eta_t (\mathcal{L}_+$[$i$] $ - \mathcal{L}_-$[$i$]$) * v$
%     }
% }
% \vspace{1em}
% \SetKwProg{Pn}{Function}{:}{\KwRet $x$}
%   \Pn{\func{$x, \rho, s$}}{
%     Set the random seed to $s$ \;
%     $v \sim \mathcal{N}(0, I_d)$ \;
%     $x \gets x + \rho v$ 
%   }
% \end{algorithm}

\subsection{Proof of Theorem~\ref{thm:Rt:N}}  \label{app:proof:Rt:N}
\begin{proof}
Recall that the Hessian $\nabla^2 f(x)$ is written as $\nabla^2 f(x) = Q^{\top}\Lambda Q$, where the orthogonal $Q$ has columns $\{e_1, \ldots, e_d\}$ that are ordered eigenvectors of $\nabla^2 f(x)$, and $\Lambda = \text{diag}(\lambda_1, \lambda_2, \ldots, \lambda_d)$ where $\lambda_1\geq \ldots \geq \lambda_d$ are the order eigenvalues of $\nabla^2 f(x)$. Observe that $u \coloneqq Qv$ has the same distribution as $v$ since Gaussian is rotation-invariant. Denote $g \coloneqq Q\nabla f(x)$ where $g_i$ is the component of the gradient along the $i$-th eigenvector. Then we have

\begin{align*}
   & \quad \mathbb{E}_{v\sim\mathcal{N}}\left[\exp\left(t(\rho \nabla f(x)^{\top}v + \frac{\rho^2}{2} v^{\top} \nabla^2f(x)v)\right)\right] \\
    & = \mathbb{E}_{u\sim\mathcal{N}}\left[\exp\left(t(\rho g^{\top}u + \frac{\rho^2}{2} u^{\top} \Lambda u)\right)\right] \\
    & = (2\pi)^{-d/2} \int \exp\left(t(\rho g^{\top}u + \frac{\rho^2}{2} u^{\top} \Lambda u) - \frac{1}{2}\sum_{i=1}^d u_i^2\right) du \\
    & = (2\pi)^{-d/2} \int \exp\left(t\rho \sum_{i=1}^d g_i u_i +  \sum_{i=1}^d \frac{t\rho^2 \lambda_i - 1}{2} u_i^2\right) du \\
    & = \prod_{i=1}^d \int \frac{1}{\sqrt{2\pi}} \exp\left(-\frac{1-t\rho^2 \lambda_i}{2} u_i^2 + t\rho g_i u_i\right) du_i \\
    \displaybreak\\
    & = \prod_{i=1}^d \int \frac{1}{\sqrt{2\pi}} \exp\left(\left[-\frac{1-t\rho^2 \lambda_i}{2} \left(u_i - \frac{t\rho g_i}{1-t\rho^2\lambda_i}\right)^2 + \frac{(t\rho g_i)^2}{2(1-t\rho^2\lambda_i)}\right]\right) du_i \\
    & = \prod_{i=1}^d  \exp\left(\frac{(t\rho g_i)^2}{2(1-t\rho^2\lambda_i)}\right) \int \frac{1}{\sqrt{2\pi}} \exp\left(\left[-\frac{1-t\rho^2 \lambda_i}{2} \left(u_i - \frac{t\rho g_i}{1-t\rho^2\lambda_i}\right)^2 \right]\right) du_i \\
    & = \prod_{i=1}^d  \frac{\exp\left(\frac{(t\rho g_i)^2}{2(1-t\rho^2\lambda_i)}\right)}{\sqrt{1-t\rho^2\lambda_i}} \underbrace{\int \frac{\sqrt{1-t\rho^2\lambda_i}}{\sqrt{2\pi}} \exp\left(\left[-\frac{1-t\rho^2 \lambda_i}{2} \left(u_i - \frac{t\rho g_i}{1-t\rho^2\lambda_i}\right)^2 \right]\right) du_i}_{=1} \\
    & = \frac{\exp\left(\sum_{i=1}^d \frac{(t\rho g_i)^2}{2(1-t\rho^2\lambda_i)}\right)}{\prod_{i=1}^d \sqrt{1-t\rho^2\lambda_i}}
\end{align*}

where $du$ denotes the Lebesgue measure on $\mathbb{R}^d$. Note that it is required for any $i$, $1-t\rho^2 \lambda_i > 0$, i.e., choose $t\rho^2 < \frac{1}{\lambda_{\text{max}}}$ if $\lambda_{\text{max}} > 0$. The regularizer is thus
\begin{align*}
    R_t & = \frac{1}{t} \sum_{i=1}^d \frac{(t\rho g_i)^2}{2(1-t\rho^2\lambda_i)} - \frac{1}{2t} \sum_{i=1}^d  \log(1-t\rho^2\lambda_i) \\
    & = \frac{1}{2t} \sum_{i=1}^d \left[\frac{(t\rho g_i)^2}{1-t\rho^2\lambda_i} - \log(1-t\rho^2\lambda_i)\right].
\end{align*}

When $t\to 0$, we apply L’Hôpital’s rule to obtain 
\begin{align*}
    \lim_{t\to 0} R_t = - \sum_{i=1}^d \lim_{t\to 0} \frac{d(\log(1-t\rho^2\lambda_i))/dt}{d(2t)/dt} = - \sum_{i=1}^d \lim_{t\to 0} \frac{-\rho^2\lambda_i}{2(1-t\rho^2\lambda_i)}  = \frac{\rho^2}{2} \sum_{i=1}^d \lambda_i.
\end{align*}
\end{proof}

\subsection{Proof of Theorem~\ref{thm:Rt:B}}  \label{app:proof:Rt:B}

We first present the proof of a useful lemma below.
\begin{lemma}[Laplace's Principle \citep{laplace}] \label{lemma:laplace}
    Let $\mathcal{M}$ be a Lebesgue-measurable subset of $d$-dimensional Euclidean space $\mathbb{R}^d$ and let $\varphi: \mathbb{R}^d \to \mathbb{R}$ be a measurable function with $\int_{\mathcal{M}} e^{-\varphi(x)}\,dx<\infty$. Then
$$ \lim_{t \to \infty }{\frac{1}{t}}\log \int_{\mathcal{M}}e^{-t \varphi (x)}\,dx=-\mathop{\mathrm {ess\,\; inf}}_{x\in \mathcal{M}}\varphi (x).$$
where $\mathrm{ess\,\;inf}$ denotes essential infimum.
\end{lemma}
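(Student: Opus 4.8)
The plan is to prove the two one-sided inequalities $\limsup_{t\to\infty}\frac{1}{t}\log I(t) \le -m$ and $\liminf_{t\to\infty}\frac{1}{t}\log I(t) \ge -m$ separately, where I abbreviate $m := \operatorname{ess\,inf}_{x\in\mathcal{M}}\varphi(x)$ and $I(t) := \int_{\mathcal{M}} e^{-t\varphi(x)}\,dx$. Together these force the limit to exist and equal $-m$. Throughout I take $m$ finite, which is the regime relevant to the application (where $\varphi$ is a bounded quadratic over the ball); the degenerate cases $m=\pm\infty$ follow from the same estimates read with the conventions $e^{+\infty}=+\infty$.

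For the upper bound I would \emph{avoid} assuming $\mathcal{M}$ has finite volume and instead exploit the integrability hypothesis directly, by peeling off one factor. For $t\ge 1$ write $e^{-t\varphi(x)} = e^{-(t-1)\varphi(x)}\,e^{-\varphi(x)}$. Since $\varphi(x)\ge m$ for almost every $x\in\mathcal{M}$ (the defining property of the essential infimum), we have $e^{-(t-1)\varphi(x)}\le e^{-(t-1)m}$ a.e., hence
$$I(t) \;\le\; e^{-(t-1)m}\int_{\mathcal{M}} e^{-\varphi(x)}\,dx \;=\; e^{-(t-1)m}\,C,$$
where $C := I(1)<\infty$ is finite by assumption. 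Taking logarithms, dividing by $t$, and sending $t\to\infty$ gives $\tfrac{1}{t}\log I(t) \le -\tfrac{t-1}{t}\,m + \tfrac{1}{t}\log C \to -m$, so $\limsup_{t\to\infty}\tfrac1t\log I(t)\le -m$.

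For the lower bound I would localize near the essential infimum. Fix $\epsilon>0$ and set $\mathcal{M}_\epsilon := \{x\in\mathcal{M}: \varphi(x) < m+\epsilon\}$. By the definition of $m$ as the largest constant dominated by $\varphi$ almost everywhere, this sublevel set has strictly positive Lebesgue measure, $|\mathcal{M}_\epsilon|>0$; moreover $|\mathcal{M}_\epsilon|<\infty$, since on it $e^{-\varphi}>e^{-(m+\epsilon)}$ forces $e^{-(m+\epsilon)}|\mathcal{M}_\epsilon|\le C$, so $\log|\mathcal{M}_\epsilon|$ is a finite number. Restricting the integral to $\mathcal{M}_\epsilon$ and using $\varphi<m+\epsilon$ there yields
$$I(t)\;\ge\;\int_{\mathcal{M}_\epsilon} e^{-t\varphi(x)}\,dx\;\ge\; e^{-t(m+\epsilon)}\,|\mathcal{M}_\epsilon|.$$
Hence $\tfrac1t\log I(t)\ge -(m+\epsilon)+\tfrac1t\log|\mathcal{M}_\epsilon|\to -(m+\epsilon)$ as $t\to\infty$, giving $\liminf_{t\to\infty}\tfrac1t\log I(t)\ge -(m+\epsilon)$; letting $\epsilon\downarrow 0$ yields $\ge -m$.

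The two bounds sandwich the quantity, so the limit exists and equals $-m$. The one genuinely delicate point — and the step I would be most careful about — is the upper bound: the naive estimate $I(t)\le e^{-tm}|\mathcal{M}|$ breaks down when $|\mathcal{M}|=\infty$, so the argument must route through the factorization above to convert the integrability hypothesis $C<\infty$ into a usable tail bound. The remaining work is purely measure-theoretic bookkeeping, namely translating ``essential infimum'' into the a.e. lower bound $\varphi\ge m$ (used in the upper bound) and into the positivity of the measure of every sublevel set $\{\varphi<m+\epsilon\}$ (used in the lower bound).
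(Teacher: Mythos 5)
Your proof is correct, and it diverges from the paper's in a useful way on the upper bound. The lower bounds are essentially identical: both restrict the integral to the sublevel set $\{\varphi<m+\epsilon\}$, invoke its positive measure, and let $\epsilon\downarrow 0$ (you additionally verify $|\mathcal{M}_\epsilon|<\infty$ via $e^{-(m+\epsilon)}|\mathcal{M}_\epsilon|\le C$, a point the paper leaves implicit but which is needed for $\tfrac1t\log|\mathcal{M}_\epsilon|\to 0$). For the upper bound, the paper splits $\mathcal{M}=E_\varepsilon\cup E_\varepsilon^c$, uses the crude volume bound $\int_{E_\varepsilon}e^{-t\varphi}\le|E_\varepsilon|e^{-tm}$ on the sublevel set, and applies the factorization $e^{-t\varphi}\le e^{-(t-1)(m+\varepsilon)}e^{-\varphi}$ only on the complement, then recombines the two pieces and sends $\varepsilon\to 0$. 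You instead apply the factorization $e^{-t\varphi}=e^{-(t-1)\varphi}e^{-\varphi}\le e^{-(t-1)m}e^{-\varphi}$ \emph{globally}, using only the a.e.\ bound $\varphi\ge m$; this collapses the upper bound to one line, needs no splitting, no $\varepsilon$, and no finiteness of $|E_\varepsilon|$ (which the paper's upper bound tacitly requires both for the bound $|E_\varepsilon|e^{-tm}$ and for the final limit computation $\tfrac1t\log(|E_\varepsilon|+\cdots)\to 0$). What the paper's more elaborate split buys is nothing in this setting, so your version is a genuine simplification; the only caveat, which you flag, is that both arguments as written assume $m$ finite, which holds in the paper's application where $\varphi$ is a continuous quadratic on a compact ball.
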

\begin{proof}
    Denote $m\coloneqq \mathrm{ess\,\;inf}_{x\in\mathcal{M}} \varphi(x)$, fix $\varepsilon>0$ , and set $E_{\varepsilon} \coloneqq \{x\in\mathcal{M}: \varphi(x)< m + \varepsilon\}$. By definition, $E_{\varepsilon}$ has positive measure. Therefore,
    $$ \int_{\mathcal{M}}e^{-t\varphi} \,dx \geq \int_{E_{\varepsilon}} e^{-t\varphi}\,dx \geq |E_{\varepsilon}|e^{-t(m+\varepsilon)}$$
    and hence
    $$\underset{t\to\infty}{\lim\,\inf} \frac{1}{t} \log\int_{\mathcal{M}}e^{-t\varphi} \, dx = \underset{t\to\infty}{\lim\,\inf} \frac{\log |E_{\varepsilon}|}{t} -(m+\varepsilon) \geq -(m+\varepsilon).$$
    Let $\varepsilon\to 0$ and we have LHS equal to $-m$. On the other hand, we split $\mathcal{M}=E_{\varepsilon} \cup E_{\varepsilon}^c$. Since $\varphi\geq m$ on $E_{\varepsilon}$, we have
    $$\int_{E_{\varepsilon}}e^{-t\varphi}\,dx\leq |E_{\varepsilon}|e^{-tm}.$$
    We have $\varphi\geq m+\varepsilon$ on $E_{\varepsilon}^c$, so for $t\geq 1$, $-t\varphi \leq -(t-1)(m+\varepsilon) - \varphi$ and
    $$\int_{E_{\varepsilon}^c}e^{-t\varphi}dx\leq e^{-(t-1)(m+\varepsilon)}\int_{\mathcal{M}}e^{-\varphi} \,dx = Ce^{-(t-1)(m+\varepsilon)}$$
    for some $C < \infty$. Therefore, 
    $$ \underset{t\to\infty}{\lim\,\sup} \frac{1}{t} \log\int_{\mathcal{M}}e^{-t\varphi} \,dx \leq \underset{t\to\infty}{\lim\,\sup} \left\{-m + \frac{\log(|E_{\varepsilon}| + Ce^{-t\varepsilon+m+\varepsilon})}{t} \right\} = -m - \varepsilon. $$
    Let $\varepsilon\to 0$ and we have LHS equal to $-m$. We combine it with the lower bound to conclude that the limit is equal to $-m$.
\end{proof}

In the following, we prove the statements in Theorem~\ref{thm:Rt:B}. 
\begin{proof}
Recall that we denote $g=Q\nabla^2 f(x)$ and $\Lambda = \text{diag}(\lambda_1, \lambda_2, \ldots, \lambda_d)$ defined as before. Assume that $\norm{\nabla f(x)} < \infty$ and $\nabla^2 f(x)$ has bounded eigenvalues for any $x$ in our optimization trajectory. 

We denote $X=\rho g^{\top}u + \frac{t\rho^2}{2}u^{\top} \Lambda u$ with $X < \infty$, and we thus have $\mathbb{E}[\exp(tX)]<\infty$ for any $t < \infty$. Since $h(X) = \frac{1}{t} \log(\mathbb{E}[\exp(tX)])$ is continuous for $t\in\{t:\mathbb{E}[\exp(tX)]<\infty\}$, $h(X)$ is continuous and non-decreasing in $t$ for any $t<\infty$. Furthermore, the regularizer sensitivity is
\begin{align*}
    \phi_i = \frac{1}{t} \cdot \frac{1}{\mathbb{E}[\exp(tX)]} \cdot \frac{\partial \mathbb{E}[\exp(tX)]}{\partial \lambda_i} = \frac{\rho^2}{2} \frac{\mathbb{E}[\exp(t[\rho g^{\top}u + \frac{t\rho^2}{2}u^{\top} \Lambda u])u_i^2]}{\mathbb{E}[\exp(t[\rho g^{\top}u + \frac{t\rho^2}{2}u^{\top} \Lambda u])]}.
\end{align*}
It is continuous and non-decreasing in $\lambda_i$ due to
\begin{align*}
    \frac{\partial \phi_i}{\partial \lambda_i} = \frac{t\rho^4}{4} \cdot \frac{\mathbb{E}[u_i^4 e^{tX}]\mathbb{E}[e^{tX}] - (\mathbb{E}[u_i^2e^{tX}])^2}{(\mathbb{E}[e^{tX}])^2} \geq 0
\end{align*}
where the inequality follows the Cauchy-Schwarz inequality $(\mathbb{E}[AB])^2 \leq \mathbb{E}[A^2] \mathbb{E}[B^2]$. Therefore, it suffices to analyze $R_t$ and $\phi_i(t)$ under the two extreme cases that $t\to 0$ and $\infty$. Recall that we have
\begin{align*}
    R_t & = \frac{1}{t} \log \left(\int \exp \left(t\rho g^{\top}u + \frac{t\rho^2}{2}u^{\top} \Lambda u\right)\, d\mu(u)\right) \quad \text{ with } u=Qv \sim \mathcal{U}(\sqrt{d}\mathbb{B}^d) \\
     & = \frac{1}{t} \log \left( \frac{1}{\text{Vol}(\sqrt{d}\mathbb{B}^d)} \int_{\norm{u}\leq \sqrt{d}} \exp \left(t\rho g^{\top} u + \frac{t\rho^2}{2}u^{\top} \Lambda u\right)\, du\right)\numberthis \label{eq:Rt:B}
\end{align*}
\paragraph{Case of $\boldsymbol{t\to 0}$.} When $t\to 0$, we apply L’Hôpital’s rule to Eq.~(\ref{eq:Rt:B}) and obtain
\begin{align*}
      \lim_{t\to 0} R_t & =  \lim_{t\to 0} \frac{\int_{\norm{u}\leq \sqrt{d}} \nabla_t \left[\exp \left(t\rho g^{\top}u + \frac{t\rho^2}{2}u^{\top} \Lambda u\right)\right] \,du}{\int_{\norm{u}\leq \sqrt{d}} \exp \left(t\rho g^{\top} u +\frac{t\rho^2}{2}u^{\top} \Lambda u\right)\, du} \\
      & =  \lim_{t\to 0} \frac{\int_{\norm{u}\leq \sqrt{d}} \exp \left(t\rho g^{\top}u + \frac{t\rho^2}{2}u^{\top} \Lambda u\right) \left(\rho g^{\top}u + \frac{\rho^2}{2}u^{\top} \Lambda u\right) \,du}{\int_{\norm{u}\leq \sqrt{d}} \exp \left(t\rho g^{\top} u +\frac{t\rho^2}{2}u^{\top} \Lambda u\right)\, du} \\
      & =  \lim_{t\to 0} \frac{\int_{\norm{u}\leq \sqrt{d}} \rho g^{\top}u + \frac{\rho^2}{2} u^{\top} \Lambda u \,du}{\text{Vol}(\sqrt{d}\mathbb{B}^d)} \\
      & =  \frac{\rho^2}{2\text{Vol}(\sqrt{d}\mathbb{B}^d)} \lim_{t\to 0} \int_{\norm{u}\leq \sqrt{d}} u^{\top} \Lambda u \,du \\
      & = \frac{\rho^2d}{2(d+2)} \sum_{i=1}^d \lambda_i
\end{align*}

where the last step is due to 
$$ \int_{\norm{u}\leq \sqrt{d}} u^{\top}\Lambda u\, du = \int_{\norm{u}\leq \sqrt{d}} \left(\sum_{i=1}^d \lambda_i u_i^2\right) \,du 
     = \sum_{i=1}^d \lambda_i \int_{\norm{u}\leq \sqrt{d}} u_i^2 \,du $$
and
$$ \int_{\norm{u}\leq \sqrt{d}} u_i^2 \,du = \frac{1}{d} \int_{\norm{u}\leq \sqrt{d}} \norm{u}^2 \, du = \frac{d}{2(d+2)} \text{Vol}(\sqrt{d}\mathbb{B}^d).$$

\paragraph{Case of $\boldsymbol{t\to \infty}$.} When $t\to \infty$, we apply Laplace's principle \citep{laplace} that for a Lebesgue-measurable set $\mathcal{M}\in\mathbb{R}^d$ and a measurable function $\varphi: \mathbb{R}^d \to \mathbb{R}$ that satisfy $ \int_{\mathcal{M}}e^{\varphi(x)}\,dx<\infty$, we have
$$\lim_{t \to\infty}{\frac{1}{t}}\log \int_{\mathcal{M}} e^{t\varphi(x)}\,dx = \max_{x\in \mathcal{M}} \varphi(x).$$
Let $\varphi(u)=\rho g^{\top}u +\frac{\rho^2}{2}u^{\top}\Lambda u $ and $\mathcal{M}=\sqrt{d}\mathbb{B}^d$. Since $\mathcal{M}$ is measurable and $\varphi(x)\leq \rho\sqrt{d}\norm{a} +\frac{\rho^2d}{2}\max(\lambda_{\text{max}}, 0)$, the integrability condition satisfies, and we have 
\begin{align*}
   \lim_{t\to \infty} R_t & = \lim_{t\to \infty} \frac{1}{t} \log\left(\frac{1}{\text{Vol}(\sqrt{d}\mathbb{B}^d)}\right) + \lim_{t\to \infty} \frac{1}{t} \log \left( \int_{\mathcal{M}} e^{t\varphi(u)} du\right) \\
   & = \max_{\norm{u} \leq \sqrt{d}} \varphi(u).
   % & = \max_{\norm{u} \leq \sqrt{d}} \sum_{i=1}^d \rho a_i u_i +\frac{\rho^2}{2}\lambda_i u_i^2 
\end{align*}
\end{proof}
% \subsection{Sharpness} \label{app:proof:Rt:B:infty}

\subsection{General Regime ($t\to\infty$)} \label{app:proof:Rt:B:general}

Recall that we work in the Hessian eigenbasis with $\Lambda=\mathrm{diag}(\lambda_1,\ldots,\lambda_d)$ and $g=Q\nabla f(x)$. In the general regime where both the slope and curve penalties are active, we use KKT conditions to solve the maximization problem with an inequality constraint 
\begin{align*}
\max_{u:\,\|u\|\leq\sqrt{d}}\ \varphi(u)=\rho\,g^\top u+\frac{\rho^2}{2}\,u^\top \Lambda u.
\end{align*}
From the Lagrangian
$$ \mathcal{L}(u, \omega) = \rho a^{\top}u + \frac{\rho^2}{2} u^{\top}\Lambda u - \omega(u^{\top}u - d), \quad \omega\geq 0,$$
we have
\begin{align*}
    \rho g + \rho^2\Lambda u - 2\omega u = 0 &\Longleftrightarrow (\rho^2\Lambda - 2\omega I)u = -\rho g  \\
    \omega(\|u\| - \sqrt{d}) &= 0  \\
    \|u\| &\leq \sqrt{d}   \\
    2\omega I - \rho^2 \Lambda &\succeq 0 
\end{align*}
by stationarity, complementary slackness, primal feasibility, and dual feasibility, respectively.

\paragraph{Interior case.} When $\nabla^2 f(x)\preceq 0$ and the unconstrained maximizer is feasible, we take $\omega=0$ and thus $u^\star=-(1/\rho)\Lambda^{-1}g$ and
\begin{align*}
R_{\infty} = \varphi(u^\star)=-\frac{1}{2}\, g^\top\Lambda^{-1}g
= -\frac{1}{2}\sum_{i=1}^d\frac{g_i^2}{\lambda_i}\quad(\lambda_i\le 0),
\end{align*}
which indicates that making $\lambda_i$ more negative reduces the penalty. We also have the regularizer sensitivity that increases as $\lambda_i$ increases:
\begin{align*}
\phi_i = \frac{\partial R_{\infty}}{\partial \lambda_i}
= -\frac{1}{2}\,g_i^2\,\frac{\partial}{\partial \lambda_i}\left(\frac{1}{\lambda_i}\right)
= \frac{1}{2}\,\frac{g_i^2}{\lambda_i^{2}}.
\end{align*}

\paragraph{Boundary case.} In the boundary case ($\omega > 0$), the maximizer $u^\star$ solves the KKT system for
\begin{align*}
\max_{u:\,\|u\|=\sqrt{d}}\ \rho\,g^\top u+\frac{\rho^2}{2}\,u^\top \Lambda u.
\end{align*}
The stationarity states $u = (2\omega I - \rho^2 \Lambda)^{-1} \rho g$, which is well-defined when $2\omega I - \rho^2\Lambda \succ 0$, i.e.,
$\omega > \frac{\rho^2}{2} \lambda_{\text{max}}$. The correct $\omega$ is chosen so that $\norm{u(\omega)} = \sqrt{d}$ holds. Note that $\norm{u(\omega)}$ is strictly decreasing in $\omega \in (\frac{\rho^2}{2} \lambda_{\text{max}}, \infty)$ since
\begin{align*}
    \norm{u(\omega)}^2 = \sum_{i=1}^d \frac{\rho^2 g_i^2}{(2\omega-\rho^2\lambda_i)^2}
\end{align*}
is strictly decreasing from $\infty$ (assume that $g_1\neq0$) to $0$. Therefore,  there is a unique solution $\omega^{\star} > \max(\frac{\rho^2}{2} \lambda_{\text{max}}, 0)$. Then we can compute $u^{\star} = (2\omega^{\star} I - \rho^2 \Lambda)^{-1} \rho g$ since
\begin{align*}
    \rho g + \rho^2\Lambda u^{\star} - 2\omega^{\star} u^{\star} = 0. \tag{S}
\end{align*}
Next, we compute the regularizer sensitivity $\psi_i$. Since $\omega^{\star}$ and $u^\star$ are functions of $\lambda_i$, we differentiate both sides of (S) with respect to $\lambda_i$ and obtain
\begin{align*}
\rho^2 E_i u^\star+\rho^2\Lambda\,\frac{du^\star}{d\lambda_i}-2\frac{d\omega^{\star}}{d\lambda_i}u^\star-2\omega^{\star}\frac{du^\star}{d\lambda_i}=0,
\end{align*}
where $E_i$ is a diagonal matrix with a 1 at entry $i$ and 0's elsewhere. Differentiating both sides of the constraint that $(u^{\star})^\top u^{\star}=d$ gives
\begin{align*}
2(u^\star)^\top \frac{du^\star}{d\lambda_i}=0. \tag{C}
\end{align*}
Therefore, differentiating $\varphi(u^{\star})$ leads to
\begin{align*}
\frac{d}{d\lambda_i}\varphi(u^\star)
= \rho\,g^\top\frac{du^\star}{d\lambda_i}
+\frac{\rho^2}{2}\Big((u^\star)^\top E_i u^\star
+ 2(u^\star)^\top\Lambda\frac{du^\star}{d\lambda_i}\Big).
\end{align*}
Using stationarity (S) to replace $\rho g$ by $2\omega^{\star} u^\star-\rho^2\Lambda u^{\star}$ gives
\begin{align*}
\rho\,g^\top\frac{du^{\star}}{d\lambda_i} =(2\omega^{\star} u^{\star}-\rho^2\Lambda u^\star)^\top\frac{du^\star}{d\lambda_i} =2\omega^{\star} (u^{\star})^\top\frac{du^\star}{d\lambda_i}-\rho^2 (u^\star)^\top\Lambda\frac{du^\star}{d\lambda_i}.
\end{align*}
By (C), $(u^\star)^\top du^\star/d\lambda_i=0$, so the first term vanishes. Therefore, 
\begin{align*}
\frac{d}{d\lambda_i}\varphi(u^\star) &= -\rho^2 (u^\star)^\top\Lambda\frac{du^\star}{d\lambda_i}
+\frac{\rho^2}{2}(u^\star)^\top E_i u^\star
+\rho^2 (u^\star)^\top\Lambda\frac{du^\star}{d\lambda_i} \\
& = \frac{\rho^2}{2}\,(u_i^\star)^2 \\
& = \frac{\rho^4 g_i^2}{2(2\omega^{\star}-\rho^2\lambda_i)^2} \numberthis \label{dlambda_i}.
\end{align*}
Therefore, the regularizer sensitivity $\phi_i(\omega^{\star}, \lambda_i)$ for an arbitrary $\lambda_i$ is
\begin{align}
    \phi_i(\omega^{\star}, \lambda_i) = \frac{d}{d\lambda_i}\varphi(u^\star) = \frac{\rho^4 g_i^2}{2D_i^2}.
\end{align}
where $D_j \coloneqq 2\omega^{\star} - \rho^2\lambda_j > 0$. Differentiating the sensitivity w.r.t. $\lambda_i$ informs us whether the sensitivity is constant across all eigenvalues as in the average-case SAM regularizer, or increases as in the worst-case SAM and $t$-SAM regularizer $R_{\infty}$. To proceed, we track how $\omega^{\star}$ shifts when $\lambda_i$ changes by implicitly differentiating the secular equation
\begin{align}
    \psi(\omega^{\star}, \{\lambda_j\}) \coloneqq \sum_{j=1}^d \frac{\rho^2g_j^2}{(2\omega^{\star} - \rho^2\lambda_j)^2} = d \label{eq:secular}.
\end{align}
Treat $\psi$ as a function of two variables, including $\omega^{\star}(\lambda_i)$, the function value with parameter $\lambda_i$, and the parameter $\lambda_i$ itself. Differentiating both sides of Eq.~(\ref{eq:secular}) w.r.t. $\lambda_i$ leads to
\begin{align}
    \frac{\partial \psi}{\partial \omega^{\star}} \frac{\partial \omega^{\star}}{\partial \lambda_i} + \frac{\partial \psi}{\partial \lambda_i} = 0 \quad \Longrightarrow \quad \frac{\partial \omega^{\star}}{\partial \lambda_i} = - \frac{\partial \psi / \partial \lambda_i}{\partial \psi / \partial \omega^{\star}} \label{eq:chain_rule}
\end{align}
by the chain rule. Further, we differentiate $\psi$ w.r.t. $\omega^{\star}$ and $\lambda_i$ respectively and have
$$ \frac{\partial \psi}{\partial \omega^{\star}} = -4 \sum_{j=1}^d \frac{\rho^2g_j^2}{D_j^3},\qquad \frac{\partial \psi}{\partial \lambda_i} = \frac{2\rho^4g_i^2}{D_i^3}.$$
Therefore, we apply the above to Eq.~(\ref{eq:chain_rule}) and obtain
\begin{align}
    \frac{d\omega^{\star}}{d\lambda_i} = \frac{\frac{\rho^2g_i^2}{D_i^3}}{2 \sum_{j=1}^d \frac{g_j^2}{D_j^3}}.
\end{align}
Now we can compute 
\begin{align}
    \frac{\partial \phi}{\partial \lambda_i} = -\frac{\rho^4g_i^2}{D_i^3} \frac{\partial D_i}{\partial \lambda_i} = -\frac{\rho^4g_i^2}{D_i^3} \left[2\frac{\partial \omega^{\star}}{\partial \lambda_i} - \rho^2\right] = \frac{\rho^6g_i^2}{D_i^3} \left[1-\frac{\frac{g_i^2}{D_i^3}}{\sum_{j=1}^d \frac{g_j^2}{D_j^3}} \right] \geq 0,
\end{align}
which indicates that the sensitivity of $R_{\infty}$ to any arbitrary $\lambda_i$ grows when $\lambda_i$ increases. This is in contrast with the average-loss SAM, where the influence of all the eigenvalues is always equal. 

\subsection{Choosing $t$} \label{app:proof:choose_t} 

Denote the random variable $X = \rho g^{\top} u + \frac{\rho^2}{2}u^{\top}\Lambda u, \;u\sim\mathcal{U}(\sqrt{d}\mathbb{B}^d)$ and note that $m \leq X \leq M$ with $m=\frac{\rho^2 d}{2}\min(\lambda_{\text{min}}, 0) - \rho \sqrt{d}\norm{g}$ and $M=\frac{\rho^2 d}{2}\max(\lambda_{\max}, 0) + \rho \sqrt{d}\norm{g}$. By Hoeffding’s lemma, for any $t \in \mathbb{R}$, $\mathbb{E}\left[e^{t X}\right]\leq \exp\Big(t\mathbb{E}[X]+{\frac{t^2(M-m)^2}{8}}\Big)$. Then by Jensen's inequality,
$$R_t = \frac{1}{t}\log(\mathbb{E}\left[e^{t X}\right]) \leq \mathbb{E}[X]+{\frac{t(M-m)^2}{8}}.$$
Therefore, to keep $R_t$ within $\varepsilon$ from the expectation $\mathbb{E}[X]=\frac{\rho^2d}{2(d+2)} \sum_{i=1}^d \lambda_i$, which is the sharpness regularizer in the average-case SAM objective, it suffices to take 
\begin{align}
    t \leq \frac{8\varepsilon}{(M-m)^2} \leq \frac{32\varepsilon}{\rho^2d\,[\rho\sqrt{d}\max(|\lambda_{\text{max}}|,|\lambda_{\text{min}}|) + 4\norm{g}]^2}.\label{eq:choose_t}
\end{align}
Since $\rho$ is usually chosen as $\rho\leq \sqrt{d}$ in zeroth-order optimization, the effective range of $t$ is $d$-independent. The remaining parameters, such as $\lambda_{\text{max}}$, are problem-dependent, similar to the generalization bounds presented in prior literature \citep{tsam,aminian2025generalization}. Therefore, in practice, one needs to find the $t$ that yields the best validation performance on the task of interest. Through our experiments with RoBERTa-Base under $t=\{0, 1, 5, 20\}$, however, we observe that $t=1$ is a safe choice for a preliminary trial since it almost always yields superior performance to $t=0$ (Figure~\ref{fig:t_sensitivity}): We find that $t=1$ consistently matches or outperforms MeZO.

\subsection{Low-Dimensional Examples} \label{app:toy}
\paragraph{Linear regime.} We generate the piecewise-linear $f$ by discretizing the function value surface of $h(x,y)=0.07(8x^2+10y^2)+0.14$. By forming $q$ triangles (i.e., planes) $\{P_j\}_{j\in[q]}$ that intersect with $h(x,y)$, we obtain $f(x,y)\coloneqq\min_j P_j(x,y)$ that is piecewise-linear as desired. For the experiments, we run both zeroth-order methods with $k=500$ and $\rho=0.5$ for 40 iterations. We use $t=1$.

\paragraph{Stationary regime.} We define $f: \mathbb{R}^2 \to \mathbb{R}$ by $f(x,y) = \frac{1}{5} [(x^2 - 1)^2 + \frac{1}{2} x (x^2 - 1)^2 + (1 + 2(1 - x)) y^2]$. In the experiments, we consistently start from the initialization at $(0,1)$. We run gradient descent for 50 iterations and zeroth-order methods with $k=500$ and $\rho=0.8$ for 100 iterations. We use $t=1$.
\section{Experiments} \label{app:exp}

In this section, we present our experiment setup and additional results, including sharpness measurements and results under different values of $t$.

\subsection{Experiment Setup} \label{app:exp:setup}

Our code for RoBERTa and OPT experiments is adopted from \citet{mezo} and we use the same data processing workflow and prompt templates as theirs. 

For all zeroth-order methods, we follow prior work \citep{dpzero,mezo} and sample the perturbations in zeroth-order methods from $\mathcal{N}(0,I_d)$ due to the concentration of measure in high-dimensions and the empirical observations that sampling from $\mathcal{N}$ and $\mathcal{S}$ yield very similar performance \citep{mezo,dpzero}. We set $\rho=0.002$ for both RoBERTa and OPT and use $k=5$.

For all SAM variants, we use $\mathcal{U}(r\mathbb{S}^{d-1})$ as the perturbation distribution with $r$ tuned from $\{0.003,0.005,0.01,0.03,0.05\}$, consistent with prior first-order SAM papers \citep{tsam}. We tune $t$ from $\{1, 5, 20\}$ and select the best one based on validation performance. We use $k=5$ for all SAM and TSAM experiments except for using $k=3$ for TSAM on the SQuAD-OPT experiment due to memory constraints.
 %For each method, we use the following hyperparameter search grid.

\paragraph{RoBERTa-Base experiments.} All the first-order methods run for a maximum of 200 epochs, and all the zeroth-order ones run for a maximum of 700 epochs, with early stopping enabled. Note that though zeroth-order methods run for a larger number of iterations, each iteration is much faster and more memory-efficient than the first-order counterparts (see comparison in \citet{mezo}). For SGD, SAM, ESAM, and TSAM, we tune the batch-size from $\{8,32\}$ and $\eta\in\{2\text{e}-3, 1\text{e}-3, 5\text{e}-4, 2\text{e}-4, 1\text{e}-4\}$. For MeZO and ZEST, we fix the batch-size to 128 and tune $\eta\in\{2\text{e}-5,1\text{e}-5,5\text{e}-6\}$.

\paragraph{OPT-1.3B experiments.} We run the first-order methods for maximally 30 epochs (or 3750 steps), and we run the zeroth-order ones for maximally 20K steps. Following the baseline \citep{mezo}, we fix the batch-size to be 8 for first-order methods and 16 for zeroth-order ones. For SGD, we tune $\eta\in\{5\text{e}-5, 1\text{e}-5, 5\text{e}-6\}$; for SAM, ESAM, and TSAM, we tune $\eta\in\{5\text{e}-2, 1\text{e}-2, 1\text{e}-3\}$; for MeZO and ZEST, we tune $\eta\in\{5\text{e}-6, 2\text{e}-6, 1\text{e}-6\}$.

% \begin{table}
%     \caption{Values of the hyperparameters of each RoBERTa experiment.}
%     \label{table:lambda}
%     \centering
%     \begin{tabular}{lcccc}
%     \toprule
%      &  CIFAR-10 & Tiny-ImageNet & IMDB & MNLI \\
%     \midrule
%     MeZO & $10^{-2}$ & $10^{-2}$ & $10^{-2}$ & $10^{-3}$ \\
%     DPZero & $10^{-2}$ & $10^{-2}$ & $10^{-2}$ & $10^{-3}$ \\
%     PAZO-M & $10^{-2}$ & $10^{-2}$ & $10^{-2}$ & $10^{-3}$ \\
%     PAZO-P & $10^{-2}$ & $10^{-2}$ & $10^{-1}$ & $10^{-2}$ \\
%     \bottomrule
%     \end{tabular}
% \end{table}

\subsection{Experiment Results} \label{app:exp:result}

\begin{figure*}[h!]
    \centering
    \includegraphics[width=0.85\textwidth]{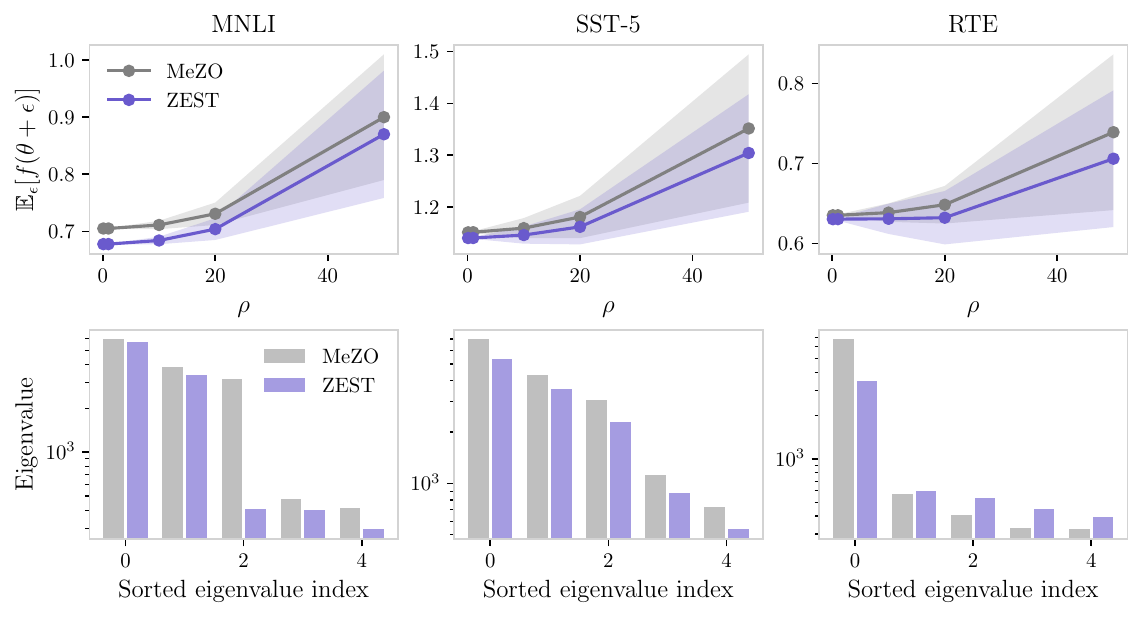}
    \caption{Sharpness of the solutions found by MeZO and ZEST on MNLI, SST-5, and RTE with RoBERTa-Base. Upper: Sharpness measured by $\mathbb{E}_{\norm{\epsilon}\leq \rho}[f(x+\epsilon)]$. The scatters denote the average loss among 500 sampled perturbations and the shade denotes the standard deviation. Lower: Sharpness measured by the top-5 eigenvalues of $\nabla^2 f(x)$. The results suggest that ZEST yields flatter solutions in terms of both the robustness to parameter perturbations and largest curvature of the loss landscape at the arrived minimum, which agrees with our theoretical analysis in Section~\ref{sec:sharpness}.}
    \label{fig:flatness_more}
\end{figure*}

\end{document}